\documentclass[a4paper,fleqn,final]{cas-sc}

\usepackage[authoryear,longnamesfirst]{natbib}
\usepackage{amsmath,amssymb}
\usepackage{mathtools}
\usepackage{booktabs,multirow}
\usepackage{graphicx}
\usepackage{microtype}
\usepackage{xcolor}
\usepackage{algorithm,algpseudocode}
\usepackage{float}
\usepackage{capt-of}
\newtheorem{theorem}{Theorem}[section]
\newtheorem{proposition}[theorem]{Proposition}
\newtheorem{lemma}[theorem]{Lemma}
\newtheorem{corollary}[theorem]{Corollary}
\newdefinition{definition}{Definition}
\newdefinition{assumption}{Assumption}
\newdefinition{remark}{Remark}
\newproof{proof}{Proof}

\newcommand{\R}{\mathbb{R}}

\newcommand{\kl}{\kappa_l}
\newcommand{\rhol}{\rho_l}
\newcommand{\HKT}{\mathrm{HKT}}
\newcommand{\MHA}{\mathrm{MHA}}

\begin{document}
\let\WriteBookmarks\relax
\def\floatpagepagefraction{1}
\def\textpagefraction{.001}

\makeatletter
\renewcommand{\fps@figure}{htbp}
\makeatother

\shorttitle{Hierarchical Kernel Transformer}
\shortauthors{G.~Cirrincione}


\title[mode=title]{Hierarchical Kernel Transformer: Multi-Scale
  Attention with an Information-Theoretic Approximation Analysis}

\author[1]{Giansalvo Cirrincione}[
  auid=001,
  orcid=0000-0002-2894-4164
]
\cormark[1]
\ead{giansalvo.cirrincione@u-picardie.fr}

\affiliation[1]{
  organization={Laboratoire LTI, Université de Picardie Jules Verne},
  addressline={33 rue Saint-Leu},
  city={Amiens},
  postcode={80039},
  country={France}
}

\cortext[1]{Corresponding author.
  Tel.: +33\,(0)3\,22\,53\,40\,38.
  \textit{E-mail}: giansalvo.cirrincione@u-picardie.fr}

\begin{abstract}
The self-attention mechanism at the core of modern Transformer
models treats every pair of tokens in a sequence with equal
architectural capacity, regardless of how far apart they are.
This \emph{single-scale bias} limits performance on tasks that
require reasoning simultaneously at short and long range,
while incurring quadratic computational cost in sequence length.

The \emph{Hierarchical Kernel Transformer} (HKT) is proposed,
a multi-scale attention mechanism that processes the input
sequence at multiple resolution levels simultaneously.
At each level, a compressed version of the sequence is obtained
by trainable causal downsampling; attention scores are computed
independently at each level and combined through learned weights.
This design captures both local patterns and long-range structure
with a total computational cost of at most $4/3$ times that of
standard attention, regardless of the number of levels
($1.3125\times$ for three levels).

Four theoretical contributions are established:
(i) the hierarchical scoring function defines a positive
semidefinite kernel under a sufficient condition on the
symmetrised bilinear form (Proposition~3.1);
(ii) the asymmetric score matrix decomposes uniquely into
a symmetric component controlling reciprocal attention and
an antisymmetric component controlling directional attention,
with HKT providing $L$ independent such pairs across scales
(Propositions~3.5--3.6);
(iii) the approximation error decomposes into three interpretable
components with an explicit non-Gaussian correction and a
geometric decay bound in the number of levels
(Theorem~4.3, Proposition~4.4);
(iv) HKT strictly subsumes standard attention and causal
convolution in the single-head setting (Proposition~3.4).

Empirically, HKT achieves consistent gains over retrained
standard attention baselines across three tasks of different
modalities (mean $\pm$ std, 3 seeds):
$+4.77$pp on synthetic ListOps ($55.10 \pm 0.29\%$ vs
$50.33 \pm 0.12\%$, $T=512$),
$+1.44$pp on sequential CIFAR-10 ($35.45 \pm 0.09\%$ vs
$34.01 \pm 0.19\%$, $T=1{,}024$),
and $+7.47$pp on IMDB character-level sentiment classification
($70.19 \pm 0.57\%$ vs $62.72 \pm 0.40\%$, $T=1{,}024$),
all at $1.31\times$ computational overhead.
\end{abstract}

\begin{keywords}
Attention \sep
Hierarchical \sep
Kernel \sep
Long-range \sep
Kurtosis \sep
Transformer
\end{keywords}

\maketitle

\section{Introduction}
\label{sec:intro}

The self-attention mechanism \citep{vaswani2017} has become the
dominant computational primitive in sequence modelling.
For a sequence of $T$ tokens, attention computes a weighted average
of value vectors, where the weights are determined by pairwise
dot products between query and key projections.
This single operation, repeated across layers, has proven
remarkably expressive across a wide range of tasks.

Yet this simplicity conceals a structural limitation.
Standard self-attention is \emph{scale-blind}: a token at position
$t$ attends to every other token with the same architectural
capacity, whether the target is adjacent or at distance $T/2$.
The network must learn to ignore distant tokens when local context
suffices, and to attend globally when long-range reasoning is
required, with no structural prior to guide either regime.

This limitation becomes critical on long-sequence tasks.
The Long Range Arena (LRA) benchmark \citep{tay2021} systematically
documents this failure: standard MHA achieves only $36.37\%$ on
ListOps at sequence length $T=2{,}048$.
Efficient attention variants \citep{beltagy2020,choromanski2021}
address computational cost but not the structural bias: they
restrict which pairs are computed, not how multi-scale structure
is captured.

A different approach is taken here.
Rather than sparsifying or approximating the attention matrix,
it is \emph{factored} across scales.
The Hierarchical Kernel Transformer (HKT) computes attention at
$L$ resolution levels, with each level operating on a downsampled
representation of the sequence.
The level-$l$ score matrix captures interactions at scale $s^l$,
where $s \geq 2$ is the downsampling stride.
The final attention is a learned convex combination of level-specific
scores, upsampled to the original resolution.

This design has three properties that standard attention lacks.
First, it is \emph{structurally multi-scale}: level $l$ captures
the frequency band $[\pi/s^l, \pi/s^{l-1}]$ in the sequence
spectrum (Proposition~\ref{prop:freq}).
Second, it is \emph{computationally efficient}: the total cost
is $\frac{4}{3}(1-4^{-L})$ times MHA, bounded above by
$\frac{4}{3}$ regardless of $L$ (Proposition~\ref{prop:complexity}).
Third, it admits \emph{information-theoretic guarantees}: the
approximation error per hierarchical level is bounded in terms of
the mutual information $I(S^{(l)}; f(X))$, related explicitly
to the squared multiple correlation $\rho_l^2$ and the
non-Gaussianity index $\kappa_l$ (Theorem~\ref{thm:approx}).

\paragraph{Contributions.}
The following contributions are made:
\begin{enumerate}
  \item \textbf{Architecture (Section~\ref{sec:framework}).}
    HKT is defined as a hierarchical attention mechanism combining
    $L$ resolution levels with a hybrid conv/attention head and
    input-dependent level fusion.

  \item \textbf{Kernel theory (Section~\ref{sec:framework}).}
    The hierarchical score matrix is shown to define a PSD kernel
    under a sufficient condition on the symmetrised bilinear form
    (Proposition~\ref{prop:sdp}), its Gram matrix factorises as
    a sum of per-level PSD matrices with an explicit rank bound
    (Proposition~\ref{prop:path}), and HKT is shown to strictly
    subsume single-head attention and causal convolution in the
    single-layer, $H=1$ setting
    (Proposition~\ref{prop:inclusion}).

  \item \textbf{Asymmetric score analysis (Section~\ref{sec:decomp}).}
    A direct analysis of the asymmetric score function used in
    practice is provided, without symmetrisation.
    The symmetric-antisymmetric decomposition
    $M^{(l)} = M_s^{(l)} + M_a^{(l)}$ is shown to separate
    \emph{reciprocity} (mutual attention strength, controlled by
    $M_s^{(l)}$) from \emph{directionality} (attention asymmetry,
    controlled by $M_a^{(l)}$)
    (Proposition~\ref{prop:rec-dir}, Corollary~\ref{cor:energy}).
    HKT with $L$ levels disposes of $L$ independent such pairs,
    one per scale (Proposition~\ref{prop:multiscale}).
    This analysis applies to the operational model and explains
    why the PSD condition is not satisfied in trained models.

  \item \textbf{Approximation theory (Section~\ref{sec:theory}).}
    A three-term error decomposition is derived with an explicit
    non-Gaussian bound on the information-theoretic reduction
    per level, presented as a motivated approximation framework
    extending the Gaussian process results of \citet{roberts2022}
    to finite-width networks.

  \item \textbf{Experiments (Section~\ref{sec:experiments}).}
    Consistent improvements over retrained MHA baselines are
    obtained on two tasks: $+4.7$pp on synthetic ListOps
    ($T=512$) and $+1.98$pp on sequential CIFAR-10 ($T=1{,}024$),
    both at $1.31\times$ computational overhead.
    Sensitivity analysis across $L \in \{1,2,3,4\}$ and
    $s \in \{2,3\}$ confirms that the gains are robust to
    hyperparameter choice.
\end{enumerate}

\section{Related Work}
\label{sec:related}

\paragraph{Efficient attention.}
A large body of work reduces the $\mathcal{O}(T^2)$ cost of
attention through sparsity \citep{beltagy2020,zaheer2020},
low-rank approximation \citep{wang2020,choromanski2021},
or linear kernels \citep{katharopoulos2020}.
These methods address computational cost without introducing
hierarchical structure. HKT is orthogonal: hierarchical structure
is added at bounded additional cost rather than reducing
existing capacity.

\paragraph{Hierarchical and multi-scale transformers.}
Hierarchical processing is central to convolutional
networks \citep{lecun1998} and has been incorporated into
vision transformers through patch-merging windows \citep{liu2021swin},
pooling-based token compression \citep{liu2021},
and cluster-based token merging \citep{wang2021cluster}.
These methods operate on spatial data and downsample hidden states
rather than attention scores.
More recently, FasterViT \citep{hatamizadeh2023fastervit}
decomposes global self-attention into local window attention
and hierarchical carrier tokens, achieving competitive
accuracy-throughput trade-offs on image classification.
HAT-Net \citep{liu2024hat} proposes hierarchical MHSA that
computes local attention first and then models global
dependencies on merged tokens.
Both FasterViT and HAT-Net target vision tasks and reduce
computational cost by restricting attention scope;
HKT instead constructs independent score matrices at each
resolution and fuses them with learned weights, maintaining
full expressivity at each scale while bounding total cost.

\paragraph{Hierarchical transformers for sequences.}
For text, hierarchical transformers exploit document structure
(words $\to$ sentences $\to$ sections) to handle long inputs
efficiently.
HDT \citep{he2024hdt} processes long documents by interleaving
local and global attention in a single layer using an efficient
hierarchical attention pattern.
HIBRIDS \citep{cao2022hibrids} introduces hierarchical biases
for structure-aware long document summarisation.
These methods rely on explicit structural segmentation of the input.
HKT instead learns a continuous multi-scale decomposition via
causal downsampling, making no assumptions about document structure
and extending naturally to non-textual sequences.

\paragraph{State space models.}
Structured state space models (SSMs) such as Mamba
\citep{gu2023mamba} offer an alternative to attention for
long-sequence modelling, achieving linear time complexity via
selective state propagation.
Mamba achieves strong results on the LRA benchmark and
matches or exceeds transformers of similar size on language tasks.
HKT and Mamba address different aspects of the long-sequence
problem: Mamba replaces attention with a recurrent mechanism,
while HKT retains the attention mechanism and augments it with
explicit multi-scale structure.
The two approaches are complementary and could in principle
be combined (multi-scale Mamba layers with hierarchical
state updates), which we leave as future work.

\paragraph{Kernel methods for attention.}
The connection between attention and kernel methods has been
formalised in \citet{tsai2019,katharopoulos2020,choromanski2021}.
Performers \citep{choromanski2021} approximate the softmax kernel
with random feature maps.
HKT employs a sum of hierarchical kernels, each PSD under a
sufficient condition (Proposition~\ref{prop:sdp}), without
approximation.
The asymmetric score decomposition
$M^{(l)} = M_s^{(l)} + M_a^{(l)}$ (Section~\ref{sec:decomp})
provides a direct analysis of the operational model that
complements the kernel-theoretic view.

\paragraph{Long Range Arena.}
\citet{tay2021} introduced LRA as a systematic benchmark for
long-range sequence models, comprising five tasks with sequence
lengths up to $4{,}096$.
On ListOps, standard MHA achieves $36.37\%$, Longformer $35.63\%$,
and Performer $18.01\%$.
The synthetic variant used here reproduces the algebraic structure
of ListOps at $T = 512$.

\paragraph{Gaussian process limits.}
\citet{roberts2022} establish that sufficiently wide networks
converge to Gaussian processes, with corrections of order
$\mathcal{O}(1/d)$.
Lemma~\ref{lem:info-reduction} extends this analysis to the
information content of hierarchical score matrices, providing
an explicit non-Gaussian correction for the finite-width regime.

\section{The HKT Framework}
\label{sec:framework}

\subsection{Notation}
\label{sec:notation}

Let $X \in \R^{T \times d}$ be the input token matrix, where
$T$ is the sequence length and $d = d_{\mathrm{model}}$ the
embedding dimension.
Let $H \in \mathbb{N}$ be the number of attention heads.
Let $s \in \mathbb{N}$, $s \geq 2$, be the downsampling stride
and $L \in \mathbb{N}$, $L \geq 1$, the number of hierarchical
levels.
At level $l \in \{0, \ldots, L-1\}$:
\begin{align*}
  T_l &\coloneqq \lfloor T / s^l \rfloor
       \quad\text{(sequence length at level }l\text{)}, \\
  d_l &\coloneqq \max(d / 2^l,\, 32)
       \quad\text{(embedding dimension at level }l\text{)}, \\
  d_k &\coloneqq d / H
       \quad\text{(per-head key dimension)}, \\
  d_k^{(l)} &\coloneqq \max(d_k / 2^l,\, 16)
       \quad\text{(per-level key dimension)}.
\end{align*}
By convention, $X^{(0)} \coloneqq X$, $T_0 = T$, $d_0 = d$.
The learnable weight matrices are:
$W_Q^{(l)}, W_K^{(l)} \in \R^{d_k^{(l)} \times d_l}$
(query and key projections at level $l$);
$W_V^{(l)} \in \R^{d \times d_l}$ (value projection);
$W_O^{(l)} \in \R^{d \times d}$ (output projection);
$\gamma \in \R^L$ (fusion weight logits), giving
$\lambda_l \coloneqq \mathrm{softmax}(\gamma)_l$;
$\tilde{\gamma} \in \R^{H \times L}$ (hybrid mixing logits), giving
$\beta_h^{(l)} \coloneqq \sigma(\tilde{\gamma}_{hl}) \in (0,1)$.

\subsection{Hierarchical kernel scoring}
\label{sec:scoring}

\noindent\includegraphics[width=0.92\linewidth]{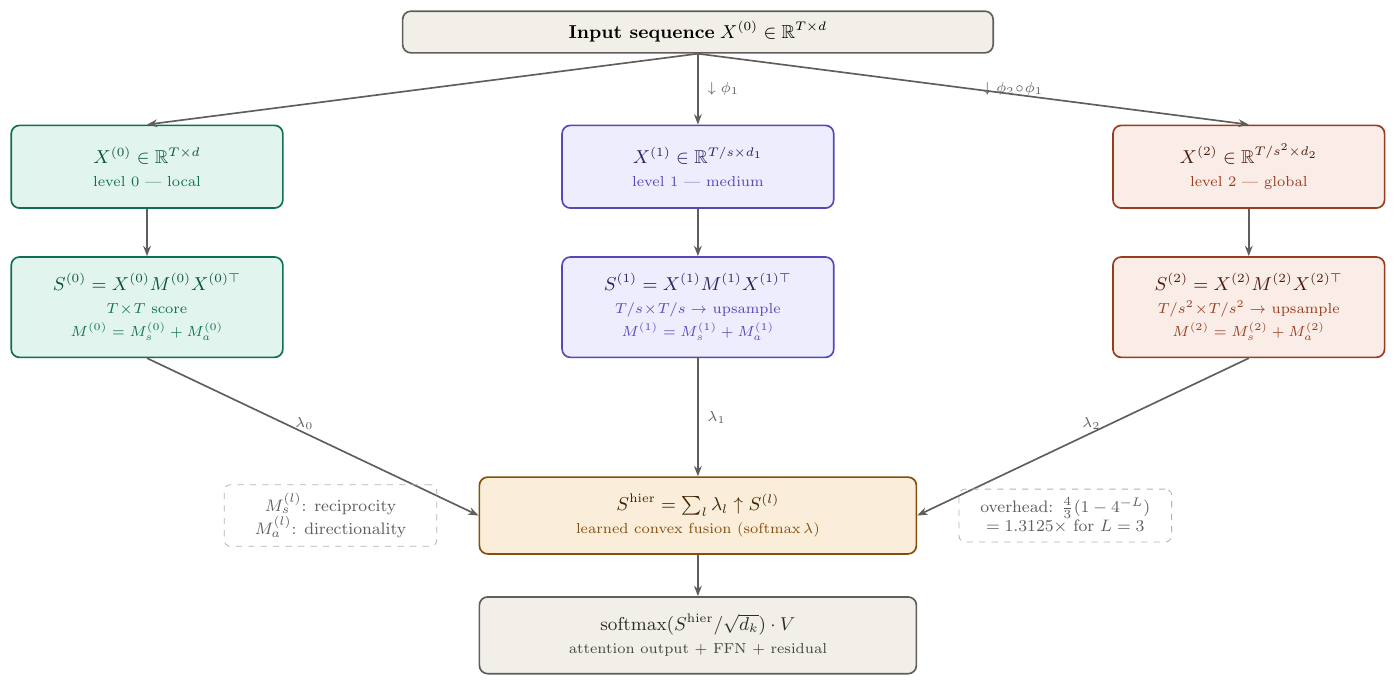}
\vspace{2pt}
\captionof{figure}{Architecture of the Hierarchical Kernel Transformer (HKT)
for $L=3$ levels and stride $s=2$.
The input sequence $X^{(0)}$ is processed at three resolutions via
causal downsampling ($\phi_1$, $\phi_2\circ\phi_1$).
At each level $l$, a score matrix $S^{(l)} = X^{(l)}M^{(l)}X^{(l)\top}/\sqrt{d_k^{(l)}}$
is computed, where $M^{(l)} = M_s^{(l)} + M_a^{(l)}$ decomposes into
a symmetric component (controlling reciprocal attention) and an
antisymmetric component (controlling directional attention).
The level scores are upsampled and fused via learned convex weights
$\lambda_l$ (softmax-normalised), then passed through standard
softmax attention and FFN layers.
Total overhead: $\frac{4}{3}(1-4^{-L}) = 1.3125\times$ for $L=3$.}
\label{fig:architecture}
\vspace{6pt}

\paragraph{Level-$l$ representations.}
For $l \geq 1$, the level-$l$ representation
$X^{(l)} \in \R^{T_l \times d_l}$ is:
\begin{equation}
  X^{(l)} = \phi_l \circ \cdots \circ \phi_1(X^{(0)}),
  \label{eq:downsample}
\end{equation}
where each $\phi_{l'}: \R^{T_{l'-1} \times d_{l'-1}} \to
\R^{T_{l'} \times d_{l'}}$ is a depthwise-separable causal
convolution (kernel size $k=3$, stride $s$) followed by
LayerNorm and GELU activation.
Left-padding with $k-1$ zeros enforces causality:
$[X^{(l)}]_m$ depends only on
$[X^{(l-1)}]_0, \ldots, [X^{(l-1)}]_{ms}$.

\paragraph{Level-$l$ score matrix.}
\begin{equation}
  S^{(l)}_{ij}
  = \frac{\langle W_Q^{(l)} X^{(l)}_i,\;
                 W_K^{(l)} X^{(l)}_j \rangle}{\sqrt{d_k^{(l)}}},
  \quad i, j \in \{0, \ldots, T_l - 1\}.
  \label{eq:score}
\end{equation}

\paragraph{Hierarchical score.}
\begin{equation}
  S^{\mathrm{hier}}_{ij}
  = \sum_{l=0}^{L-1} \lambda_l \cdot
    \tilde{S}^{(l)}_{\lfloor i/s^l \rfloor,\;
                     \lfloor j/s^l \rfloor},
  \quad i, j \in \{0, \ldots, T - 1\},
  \label{eq:shier}
\end{equation}
where $\tilde{S}^{(l)}$ denotes $S^{(l)}$ after optional causal
masking ($\tilde{S}^{(l)}_{mm'} = -\infty$ for $m' > m$).

\begin{proposition}[Hierarchical kernel is PSD under a sufficient condition]
\label{prop:sdp}
Define the \emph{symmetrised} level-$l$ matrix
$M^{(l)} \coloneqq (W_Q^{(l)\top} W_K^{(l)} +
W_K^{(l)\top} W_Q^{(l)}) / (2\sqrt{d_k^{(l)}})$
and the associated kernel
\begin{equation}
  k_l^{\mathrm{sym}}(x, y) \coloneqq \exp\!\left(
    x^\top M^{(l)} y
  \right).
  \label{eq:ksym}
\end{equation}
If $M^{(l)}$ is positive semidefinite, then
$k_l^{\mathrm{sym}}$ is a PSD kernel, and
$K_{\mathrm{hier}}^{\mathrm{sym}}(X_i, X_j) \coloneqq
\sum_{l=0}^{L-1} \lambda_l
k_l^{\mathrm{sym}}(X^{(l)}_{\lfloor i/s^l\rfloor},
                  X^{(l)}_{\lfloor j/s^l\rfloor})$
is PSD for any $\lambda_l \geq 0$.
\end{proposition}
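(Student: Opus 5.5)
The proof proceeds in three stages: the bilinear form, then its exponential, then the pullback and sum.

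\emph{Stage 1 (linear kernel).} Since $M^{(l)}$ is symmetric by construction and assumed PSD, it admits a factorisation $M^{(l)} = R^\top R$, for instance $R = (M^{(l)})^{1/2}$. Hence $x^\top M^{(l)} y = \langle Rx, Ry\rangle$. This is an inner product of feature maps, so it is a PSD kernel: for any points $x_1,\dots,x_n$ and coefficients $c\in\R^n$,
\[
\sum_{a,b} c_a c_b\, x_a^\top M^{(l)} x_b = \bigl\|\textstyle\sum_a c_a R x_a\bigr\|^2 \ge 0 .
\]

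\emph{Stage 2 (exponentiation).} Expand $\exp(x^\top M^{(l)} y) = \sum_{n\ge 0} (x^\top M^{(l)} y)^n / n!$. By the Schur product theorem, each power $(x^\top M^{(l)} y)^n$ is a PSD kernel: the entrywise product of PSD Gram matrices is PSD, and the case $n=0$ gives the all-ones matrix, which is PSD. Nonnegative combinations of PSD kernels are PSD. On any finite point set the series of Gram matrices converges entrywise, and the PSD cone is closed. Therefore $k_l^{\mathrm{sym}}$ is PSD. An alternative is the explicit feature map $\exp(\langle Rx,Ry\rangle)$ in the tensor-power Fock space, but the Schur argument is shorter.

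\emph{Stage 3 (pullback and sum).} For a fixed level $l$, let $\psi_l(i) = X^{(l)}_{\lfloor i/s^l\rfloor}$. Then the $(i,j)$ term is $k_l^{\mathrm{sym}}(\psi_l(i),\psi_l(j))$. For indices $i_1,\dots,i_n$, this Gram matrix is the Gram matrix of $k_l^{\mathrm{sym}}$ evaluated at the points $\psi_l(i_1),\dots,\psi_l(i_n)$. Repeated points are allowed, since PSD-ness of a kernel holds for arbitrary finite lists, so this matrix is PSD. Equivalently, it equals $P_l^\top G_l P_l$, where $G_l$ is the level-$l$ Gram matrix and $P_l$ is the $0/1$ upsampling selection matrix. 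Finally, $K_{\mathrm{hier}}^{\mathrm{sym}}$ is a combination of these PSD matrices with coefficients $\lambda_l\ge 0$, and is therefore PSD.

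The only step needing care is Stage 2, specifically the passage from finite sums to the infinite series. I will handle it by working with finite Gram matrices and invoking closedness of the PSD cone under entrywise limits. I will also state explicitly that the causal mask $-\infty$ is not part of $k_l^{\mathrm{sym}}$, so no $\exp(-\infty)$ entries arise. A second subtlety is that $K_{\mathrm{hier}}^{\mathrm{sym}}$ is really a kernel on token indices, because $X^{(l)}$ depends on the whole sequence. I will therefore phrase the conclusion for the fixed input $X$, as a kernel on $\{0,\dots,T-1\}$, which is exactly what Stage 3 proves.
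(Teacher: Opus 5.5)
Your proof is correct and follows the same route as the paper: Taylor-expand the exponential, note that each power of the PSD bilinear kernel $x^\top M^{(l)} y$ is PSD by the Schur product theorem, and take nonnegative combinations. Your version is more careful than the paper's, which leaves implicit three steps you supply: the factorisation $M^{(l)} = R^\top R$, the passage to the limit of the series via closedness of the PSD cone, and the pullback through the index map $i \mapsto X^{(l)}_{\lfloor i/s^l\rfloor}$ (with repeated points) before summing over levels.
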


\begin{proof}
When $M^{(l)} \succeq 0$, the function $k(x,y) = \exp(x^\top M^{(l)} y)$
is PSD because it equals the inner product
$\langle \phi(x), \phi(y) \rangle$ in the feature space
induced by the Taylor expansion of the exponential, where
each monomial $x^\top M^{(l)} y)^n / n!$ corresponds to
a PSD kernel (product of a PSD kernel is PSD by closure).
A non-negative linear combination of PSD kernels is PSD.

\emph{Remark on the PSD condition.}
$M^{(l)}$ is symmetric by construction but not necessarily PSD,
since $W_Q^{(l)\top} W_K^{(l)}$ may have negative eigenvalues.
The condition $M^{(l)} \succeq 0$ can be enforced by
initialisation (e.g.\ $W_Q^{(l)} = W_K^{(l)}$) or by
projecting $M^{(l)}$ onto the PSD cone after each gradient step.
In practice, neither constraint is imposed.
We verified empirically that the PSD condition is not satisfied
by trained models: across all 12 configurations (4 encoder
layers $\times$ 3 hierarchy levels) of a trained HKT-Small,
approximately $50\%$ of eigenvalues of $M^{(l)}$ are negative
(min eigenvalue ranging from $-0.51$ to $-0.08$).
This is consistent with $M^{(l)}$ being a generic symmetric
matrix: without an explicit PSD constraint, gradient descent
has no reason to confine $M^{(l)}$ to the PSD cone.
The symmetrised kernel therefore serves as a theoretical
surrogate rather than a description of the actual trained model
(see Definition~\ref{rem:asymmetric} below).
\end{proof}

\begin{definition}[Practical note on asymmetric scores]
\label{rem:asymmetric}
In practice, $W_Q^{(l)} \neq W_K^{(l)}$ and the raw score
$S^{(l)}_{ij} = \langle W_Q^{(l)} x_i, W_K^{(l)} x_j\rangle /
\sqrt{d_k^{(l)}}$ is not symmetric.
The symmetrised kernel $k_l^{\mathrm{sym}}$ of
Proposition~\ref{prop:sdp} is therefore a \emph{theoretical
surrogate}: it establishes that the scoring mechanism is close
to a PSD kernel (specifically, that the symmetric part of
the bilinear form $x^\top M^{(l)} y$ is PSD), while the
asymmetric component encodes the directional query--key
distinction that is essential for expressivity in practice.
The gap between theory and implementation is deliberate and
analogous to the standard treatment of asymmetric attention
in kernel attention literature
\citep{tsai2019,choromanski2021}: the symmetric surrogate
provides geometric structure, the asymmetric model provides
flexibility.
Closing this gap formally --- proving a PSD result directly
for the asymmetric score --- would require either restricting
$W_Q = W_K$ (standard kernel attention) or adopting a
non-symmetric positive definite framework, both of which
reduce expressivity.
\end{definition}

\begin{proposition}[Downsampling path factorisation]
\label{prop:path}
Let $\Phi^{(l)} \in \R^{N \times d_l}$ denote the matrix whose
$i$-th row is $x_i^{(l)}$, the level-$l$ compressed
representation of sample $i$.
Define $\psi^{(l)}: \R^{d_l} \to \mathcal{H}^{(l)}$ as the
Mercer feature map of $k_l^{\mathrm{sym}}$, and let
$\Psi^{(l)} \in \mathcal{H}^{(l)N}$ be the matrix whose
$i$-th row is $\psi^{(l)}(x_i^{(l)})$.
Then the Gram matrix $\mathbf{K}_{\mathrm{hier}} \in \R^{N \times N}$
admits the factorisation:
\begin{equation}
  \mathbf{K}_{\mathrm{hier}}
  = \sum_{l=0}^{L-1} \lambda_l\, \Psi^{(l)} \Psi^{(l)\top},
  \label{eq:gram-factor}
\end{equation}
where each term $\lambda_l \Psi^{(l)}\Psi^{(l)\top}$ is PSD.
Two consequences follow:
\begin{enumerate}
  \item[(i)] \textbf{Rank bound:}
    $\mathrm{rank}(\mathbf{K}_{\mathrm{hier}})
    \leq \sum_{l=0}^{L-1} \min(N,\, 2d_k^{(l)})$.
  \item[(ii)] \textbf{Local co-occurrence bias:}
    two tokens $x_i, x_j$ sharing the same compressed block
    at level $l$ (i.e.\ $x_i^{(l)} = x_j^{(l)}$) receive
    the maximum level-$l$ kernel value, creating a multi-scale
    preference for locally co-occurring token pairs.
\end{enumerate}
See Appendix~\ref{app:path} for the complete proof.
\end{proposition}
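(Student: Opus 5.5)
The plan is to build everything on Proposition~\ref{prop:sdp}, assuming throughout its hypothesis $M^{(l)}\succeq 0$, so that each $k_l^{\mathrm{sym}}$ is a PSD kernel. Mercer's theorem, or the explicit Taylor-series feature map used in the proof of Proposition~\ref{prop:sdp}, then supplies a map $\psi^{(l)}:\R^{d_l}\to\mathcal{H}^{(l)}$ with $k_l^{\mathrm{sym}}(x,y)=\langle\psi^{(l)}(x),\psi^{(l)}(y)\rangle_{\mathcal{H}^{(l)}}$. Writing out the $(i,j)$ entry of $\mathbf{K}_{\mathrm{hier}}$ from the definition of $K^{\mathrm{sym}}_{\mathrm{hier}}$ gives
\[
[\mathbf{K}_{\mathrm{hier}}]_{ij}=\sum_l\lambda_l\langle\psi^{(l)}(x_i^{(l)}),\psi^{(l)}(x_j^{(l)})\rangle,
\]
which is exactly the $(i,j)$ entry of $\sum_l\lambda_l\Psi^{(l)}\Psi^{(l)\top}$. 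This proves \eqref{eq:gram-factor}. Each summand is PSD because $v^\top\Psi^{(l)}\Psi^{(l)\top}v=\|\sum_i v_i\psi^{(l)}(x_i^{(l)})\|^2\ge 0$ and $\lambda_l\ge 0$.

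For (i), subadditivity of rank reduces the claim to $\mathrm{rank}(\Psi^{(l)}\Psi^{(l)\top})\le\min(N,2d_k^{(l)})$. The bound by $N$ is trivial. For the other bound I would first note that $\mathrm{rank}(M^{(l)})\le\mathrm{rank}(W_Q^{(l)\top}W_K^{(l)})+\mathrm{rank}(W_K^{(l)\top}W_Q^{(l)})\le 2d_k^{(l)}$. Next, factor $M^{(l)}=B^\top B$ with $B\in\R^{r\times d_l}$ and $r\le 2d_k^{(l)}$. Then $k_l^{\mathrm{sym}}(x,y)=\exp(\langle Bx,By\rangle)$, so $\psi^{(l)}$ factors through the $r$-dimensional linear image $z=Bx$. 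I would then try to show that the Gram matrix of the points $z_i=Bx_i^{(l)}$ under the exponential kernel has rank at most $r$.

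\textbf{This is the step I expect to be the main obstacle.} The factorisation through $B$ does bound the rank of the linear Gram matrix $\Phi^{(l)}M^{(l)}\Phi^{(l)\top}$ by $r$. However, the exponential kernel on $\R^r$ is strictly positive definite on distinct points, so $\exp(\langle z_i,z_j\rangle)$ is generically of full rank $N$ even when $r=1$; a quick check with $N=2$ and $r=1$ should confirm this. I would therefore look for additional structure that makes the bound hold, namely that only $T_l$ distinct compressed vectors occur. When the $N$ samples are tokens sharing blocks, the rows of $\Psi^{(l)}$ repeat, so the rank is at most the number of distinct $x^{(l)}$'s, i.e.\ $\le\min(N,T_l)$. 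If no such structure yields $2d_k^{(l)}$, I would record that (i) holds as stated only under the first-order (linearised) truncation of the feature map. In that case I would flag the correct general bound as the number of distinct level-$l$ representations, rather than force a false step.

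For (ii), set $x=x_i^{(l)}=x_j^{(l)}$, so the level-$l$ value is $k_l^{\mathrm{sym}}(x,x)=\exp(x^\top M^{(l)}x)$. For any competitor $y$, the Cauchy--Schwarz inequality in $\mathcal{H}^{(l)}$ gives
\[
k_l^{\mathrm{sym}}(x,y)\le\sqrt{k_l^{\mathrm{sym}}(x,x)\,k_l^{\mathrm{sym}}(y,y)}.
\]
This yields $k(x,y)\le k(x,x)$ whenever $y^\top M^{(l)}y\le x^\top M^{(l)}x$. To obtain maximality over all competitors, I would invoke the LayerNorm at the end of each $\phi_l$, which places all $x^{(l)}$ on a common normalised set. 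I would then need $x^\top M^{(l)}x$ to be (approximately) constant on that set, which I would state explicitly as the extra assumption under which the raw kernel value is maximal on the diagonal.
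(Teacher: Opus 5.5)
Your proof of the factorisation and of positive semidefiniteness of each summand is the paper's Step~1. Your skeleton for (i) and (ii) is also the paper's: subadditivity of rank together with $\mathrm{rank}(M^{(l)})\le 2d_k^{(l)}$, and Cauchy--Schwarz in $\mathcal{H}^{(l)}$. The two places where you stop are exactly where the appendix proof takes an invalid step, and you are right not to force them. For the exact exponential kernel, (i) and (ii) are false as stated, so what is missing is a hypothesis, not an idea.

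\textbf{Part (i).} The appendix asserts that the feature map of $\exp(x^\top M^{(l)}y)$ with $\mathrm{rank}(M^{(l)})=r$ lives in ``the polynomial RKHS of an $r$-dimensional space'', whence $\dim\mathcal{H}^{(l)}\le 2d_k^{(l)}$. But that space contains monomials of every degree in $r$ variables and is infinite-dimensional. Your check is a valid counterexample: with $r=1$, the points $0,1$ give the Gram matrix $\begin{pmatrix}1&1\\1&e\end{pmatrix}$, of rank $2$. In general $\exp(\langle z_i,z_j\rangle)=e^{\|z_i\|^2/2}\,e^{-\|z_i-z_j\|^2/2}\,e^{\|z_j\|^2/2}$ is a positive diagonal rescaling of a Gaussian Gram matrix. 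It is therefore nonsingular whenever the $z_i=Bx_i^{(l)}$ are distinct.

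Because all summands are PSD, $\lambda_0>0$ gives $\ker\mathbf{K}_{\mathrm{hier}}\subseteq\ker\mathbf{K}^{(0)}$, so $\mathbf{K}_{\mathrm{hier}}$ generically has rank $N$. For $d=128$, $H=4$, $L=3$ the claimed bound is $128$, which fails for any $N>128$ tokens with distinct level-$0$ projections. The same observation shows that your fallback cannot yield a nontrivial bound on $\mathbf{K}_{\mathrm{hier}}$. Counting distinct compressed vectors is a valid per-level bound for $l\ge 1$, but the level-$0$ term already has generic rank $N$.

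Statement (i) holds only under the linear truncation $x^\top M^{(l)}y$. There your argument $\mathrm{rank}(\Phi^{(l)}M^{(l)}\Phi^{(l)\top})\le\mathrm{rank}(M^{(l)})\le 2d_k^{(l)}$, plus subadditivity, is complete. This is the caveat the main-text proof states explicitly; the appendix replaces it with the incorrect dimension count.

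\textbf{Part (ii).} Cauchy--Schwarz gives only $k_l^{\mathrm{sym}}(u,v)\le\sqrt{k_l^{\mathrm{sym}}(u,u)\,k_l^{\mathrm{sym}}(v,v)}$. The appendix's conclusion that $k_l^{\mathrm{sym}}(v,v)$ bounds the off-diagonal entries does not follow, and neither does the main text's appeal to ``positive definiteness of the exponentiated form''. With $d_l=1$ and $M^{(l)}=1$, $k(1,1)=e<e^2=k(2,1)$. In general $y\mapsto\exp\bigl((M^{(l)}v)^\top y\bigr)$ is unbounded unless $M^{(l)}v=0$.

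Your proposed repair does not supply the constant diagonal you need. In each $\phi_l$ the LayerNorm is followed by a GELU (and standardly by a learnable affine map), so the rows of $X^{(l)}$ do not lie on a sphere. Even on a sphere, $x^\top M^{(l)}x$ is constant only if the compression of $M^{(l)}$ to the relevant subspace is a multiple of the identity.

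The clean correct form of (ii) is for the normalised kernel:
$k_l^{\mathrm{sym}}(x,y)/\sqrt{k_l^{\mathrm{sym}}(x,x)\,k_l^{\mathrm{sym}}(y,y)}=\exp\bigl(-\tfrac12(x-y)^\top M^{(l)}(x-y)\bigr)$.
This equals $1$ on co-occurring pairs and is strictly less than $1$ unless $M^{(l)}(x-y)=0$. That statement, or the geometric-mean inequality itself, is what your Cauchy--Schwarz step actually proves.
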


\begin{proof}
By definition, $[\mathbf{K}^{(l)}]_{ij}
= k_l^{\mathrm{sym}}(x_i^{(l)}, x_j^{(l)})
= \langle \psi^{(l)}(x_i^{(l)}), \psi^{(l)}(x_j^{(l)})\rangle_{\mathcal{H}^{(l)}}$,
so $\mathbf{K}^{(l)} = \Psi^{(l)}\Psi^{(l)\top}$.
This is PSD as a Gram matrix.
Summing with non-negative weights gives~\eqref{eq:gram-factor}.

\emph{(i) Rank bound (finite-dimensional approximation).}
Strictly speaking, the RKHS $\mathcal{H}^{(l)}$ of the
exponential kernel $k_l^{\mathrm{sym}}$ is
infinite-dimensional, so $\dim\mathcal{H}^{(l)} = \infty$
and $\mathrm{rank}(\mathbf{K}^{(l)}) \leq N$ trivially.
The bound $\mathrm{rank}(\mathbf{K}^{(l)}) \leq 2d_k^{(l)}$
holds under the practical approximation that only the linear
term of the Taylor expansion of $\exp(\cdot)$ is retained
(i.e.\ the kernel is approximated by its first-order
feature map $\phi(x) = W_Q^{(l)} x$ and $W_K^{(l)} x$),
which has dimension $2d_k^{(l)}$.
This approximation is standard in the kernel attention
literature \citep{tsai2019,choromanski2021} and reflects
the effective rank of attention matrices in practice,
but it is not a tight bound on the exact RKHS dimension.

\emph{(ii)} If $x_i^{(l)} = x_j^{(l)} =: v$, then
$k_l^{\mathrm{sym}}(v, v) = \exp(v^\top M^{(l)} v / \sqrt{d_k^{(l)}})$,
which is the maximum of $k_l^{\mathrm{sym}}(\cdot, \cdot)$ at
that point by positive definiteness of the exponentiated form.
\end{proof}

\subsection{Hybrid head and dynamic fusion}
\label{sec:hybrid}

\paragraph{Hybrid head.}
At each level $l$ and head $h$:
\begin{equation}
  A_h^{(l)} = \beta_h^{(l)} \cdot
    \mathrm{softmax}(S^{\mathrm{hier}}) W_V^{(l)} X^{(l)}
  + (1 - \beta_h^{(l)}) \cdot
    \mathrm{Conv}_h(X^{(l)}),
  \label{eq:hybrid}
\end{equation}
where $\mathrm{Conv}_h$ denotes grouped causal depthwise convolution.
The signature matrix $B \coloneqq \sigma(\tilde\gamma) \in
[0,1]^{H \times L}$ is interpretable post-hoc:
rows near $1$ indicate attention-dominant heads,
rows near $0$ convolution-dominant heads.

\paragraph{Dynamic fusion.}
\begin{equation}
  \mathrm{out}_t = \sum_{l=0}^{L-1} \alpha_l(X) \cdot
    W_O^{(l)} A^{(l)}_t,
  \quad
  \alpha(X) = \mathrm{softmax}\!\left(
    \mathrm{MLP}\!\Bigl(\tfrac{1}{T}
    \textstyle\sum_{t'} X_{t'}\Bigr)
  \right) \in \Delta^{L-1}.
  \label{eq:fusion}
\end{equation}

\subsection{Frequency-domain interpretation (heuristic)}
\label{sec:freq}

\begin{proposition}[Heuristic spectral interpretation]
\label{prop:freq}
Under the approximations of weakly stationary inputs and
ideal low-pass downsampling filters (approximately satisfied
for kernel size $k=3$, stride $s=2$):
\[
  \mathbb{E}_X\!\left[k_l^{\mathrm{sym}}\!\left(
    X^{(l)}_m, X^{(l)}_{m'}\right)\right]
  \approx C_l \cdot \exp\!\left(
    \frac{1}{\sigma_l^2}
    \int_{\pi/s^l}^{\pi/s^{l-1}} \hat{R}(\omega)
    \cos\!\left(\omega(m-m')s^l\right) d\omega
  \right),
\]
where $\hat{R}(\omega)$ is the input spectral density,
$C_l > 0$ a normalisation constant, and $\sigma_l > 0$
the bandwidth parameter of $k_l^{\mathrm{sym}}$.
Level $l$ thus preferentially captures sequence content
in the band $[\pi/s^l, \pi/s^{l-1}]$.
This is a motivating interpretation, not a rigorous result.
See Appendix~\ref{app:freq} for derivation.
\end{proposition}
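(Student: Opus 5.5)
The plan is a heuristic derivation in four moves: model the level-$l$ token as a linear band-pass filtered version of the input; approximate the bilinear form in the exponent by its expectation; write that expectation as a spectral integral via Wiener--Khinchin; and absorb all remaining fluctuations into $C_l$. Since Proposition~\ref{prop:freq} is explicitly a motivating interpretation, the argument only needs to be internally consistent under its two stated approximations, and I would flag exactly where each is used.

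\textbf{Step 1: linear filter model of the downsampling path.} Under the ideal low-pass assumption I would replace each $\phi_{l'}$ by its linearisation, so that the LayerNorm and GELU are treated as approximately affine around the operating point. Then $X^{(l)}_m = \sum_{n} h_l(ms^l - n)\, X_n$, where the composite filter $h_l$ has transfer function $\hat h_l(\omega) \approx \mathbf{1}_{|\omega|\le \pi/s^l}$. This is the standard anti-aliasing prerequisite for decimation by $s^l$.

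\textbf{Step 2: isolate the band specific to level $l$.} The coarser levels $l+1,\dots$ already represent the band $[0,\pi/s^{l+1}]$. I would therefore take the component that distinguishes level $l$ to be the difference of consecutive filters. Its passband is $\pi/s^{l} \le |\omega| \le \pi/s^{l-1}$ in the paper's indexing, i.e.\ the increment between levels $l-1$ and $l$.

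\textbf{Step 3: spectral form of the exponent.} By weak stationarity and Wiener--Khinchin,
\[
\mathbb{E}\bigl[X^{(l)\top}_m M^{(l)} X^{(l)}_{m'}\bigr] = \int \hat R(\omega)\,|\hat h_l(\omega)|^2 \cos\bigl(\omega (m-m') s^l\bigr)\, d\omega
\]
up to a trace factor $\mathrm{tr}(M^{(l)}\Sigma)$. The cosine appears because level-$l$ positions $m,m'$ sit at input lags $(m-m')s^l$. Substituting the band indicator from Step 2 restricts the integral to $[\pi/s^l,\pi/s^{l-1}]$. The scalar factors, including $1/\sqrt{d_k^{(l)}}$ and the trace, are collected into $1/\sigma_l^2$.

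\textbf{Step 4: pass the expectation inside the exponential.} This is the main obstacle, because $\mathbb{E}[e^{Z}] \ne e^{\mathbb{E} Z}$. I would first treat the bilinear form $Z$ as approximately Gaussian, which is reasonable for sums over many filtered coordinates. That gives $\mathbb{E}[e^Z] = e^{\mathbb{E}Z + \mathrm{Var}(Z)/2}$. Under stationarity, the variance term depends only weakly on the lag $m-m'$, with its dominant part being the diagonal fourth-moment contribution, so I would absorb $e^{\mathrm{Var}(Z)/2}$ into $C_l$.

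The honest caveat is that the neglected lag-dependence of $\mathrm{Var}(Z)$, together with the non-Gaussian corrections, is what makes the result an approximation. Those are precisely the terms the later $\kappa_l$ analysis addresses. The conclusion that level $l$ preferentially captures the band $[\pi/s^l,\pi/s^{l-1}]$ then follows by reading off which frequencies modulate the kernel's dependence on lag.
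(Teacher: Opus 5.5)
Your route is the paper's route in Appendix~\ref{app:freq}: a low-pass model of the downsampling cascade, a spectral form of the exponent, and isolation of the band by differencing consecutive levels. Your Step~4 is more explicit than the paper, which silently moves $\mathbb{E}$ inside $\exp$. Note only that $e^{\mathbb{E}Z+\mathrm{Var}(Z)/2}$ is a small-$\|M^{(l)}\|$ cumulant truncation: a Gaussian bilinear form has exponential tails, so $\mathbb{E}e^{Z}$ can diverge.

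\textbf{The gap is the band isolation in Steps~2--3.} The kernel $k_l^{\mathrm{sym}}$ is evaluated on $X^{(l)}$. By your own Step~1, its spectral weight is $|\hat h_l(\omega)|^2\approx\mathbf{1}_{|\omega|\le\pi/s^l}$, which vanishes on exactly the band $(\pi/s^l,\pi/s^{l-1}]$ you integrate over. Substituting the indicator of $h_{l-1}-h_l$ amounts to evaluating the kernel on a Laplacian-pyramid detail signal $X^{(l-1)}-{\uparrow}X^{(l)}$. HKT never forms such a signal: its levels are successively low-passed signals, i.e.\ a Gaussian pyramid. Carried through consistently, Steps~1--4 give
\[
\mathbb{E}_X\bigl[k_l^{\mathrm{sym}}(X^{(l)}_m,X^{(l)}_{m'})\bigr]\approx C_l\exp\Bigl(\frac{1}{\sigma_l^2}\int_0^{\pi/s^l}\hat R(\omega)\cos\bigl(\omega(m-m')s^l\bigr)\,d\omega\Bigr),
\]
which is a low-pass exponent, not a band-pass one.

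Step~2 is also inconsistent on its own terms. If coarser levels already cover $[0,\pi/s^{l+1}]$, the part distinctive to level $l$ is $[\pi/s^{l+1},\pi/s^l]$. By contrast, $[\pi/s^l,\pi/s^{l-1}]$ is what level $l-1$ has and level $l$ lacks; for $l=0$ it is $[\pi,s\pi]$, above Nyquist.

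Your final ``read-off'' step cannot repair this. Level-$l$ lags are multiples of $s^l$, and $\cos(\omega n s^l)=\cos\bigl((2\pi/s^l-\omega)ns^l\bigr)$. So for $s=2$ the claimed band $[\pi/s^l,2\pi/s^l]$ aliases exactly onto $[0,\pi/s^l]$, and the lag dependence on the level-$l$ lattice cannot single it out.

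The paper's derivation has the same defect. It obtains the exponent $\int_0^{\pi/s^l}$ and then ``subtracts the level-$(l-1)$ contribution''. Taken literally, this yields $\int_0^{\pi/s^l}-\int_0^{\pi/s^{l-1}}=-\int_{\pi/s^l}^{\pi/s^{l-1}}$, with the wrong sign. It also describes a ratio of kernel expectations at two levels at matched input lags, not $\mathbb{E}[k_l^{\mathrm{sym}}]$. So you have reproduced the paper's argument, flaw included.

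What survives heuristically is the following. Since $\mathrm{softmax}(S^{\mathrm{hier}})$ exponentiates $\sum_l\lambda_l S^{(l)}$, the per-level low-pass exponents add, with input lag $\approx i-j$ at every level. This gives a staircase spectral weight $\sum_l(\lambda_l/\sigma_l^2)\mathbf{1}_{|\omega|\le\pi/s^l}$ whose step at $\pi/s^l$ is set by $\lambda_l$. A band-pass attribution makes sense only for differences of consecutive levels. In that case the band separating level $l$ from level $l+1$ is $[\pi/s^{l+1},\pi/s^l]$, one index off from the statement.
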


\subsection{Representational capacity}
\label{sec:capacity}

Let $\mathcal{F}_{\mathrm{attn}}(H, d, T)$ denote the class of
functions computable by a single-layer MHA with $H$ heads,
embedding dimension $d$, and sequence length $T$, and let
$\mathcal{F}_{\mathrm{conv}}(d, k)$ denote those computable by
causal depthwise convolution with kernel size $k$ and
dimension $d$.
Let $\mathcal{F}_{\HKT}(H, d, L, s, T)$ denote the corresponding
HKT class with $L$ levels and stride $s$.

\begin{proposition}[Representational capacity]
\label{prop:inclusion}
In the setting $H \geq 1$, $d \geq 2$, $T \geq 4$, $L \geq 2$,
$s = 2$:
\begin{enumerate}
  \item[(i)] \textbf{(Inclusion)} $\mathcal{F}_{\mathrm{attn}}(H,d,T)
    \subseteq \mathcal{F}_{\HKT}(H,d,L,s,T)$
    and $\mathcal{F}_{\mathrm{conv}}(d,k)
    \subseteq \mathcal{F}_{\HKT}(H,d,L,s,T)$.
  \item[(ii)] \textbf{(Strict inclusion for $H=1$)}
    There exists a function
    $f \in \mathcal{F}_{\HKT}(1,d,L,s,T) \setminus
    (\mathcal{F}_{\mathrm{attn}}(1,d,T) \cup
     \mathcal{F}_{\mathrm{conv}}(d,k))$.
\end{enumerate}
For $H \geq 2$, HKT strictly subsumes the single-scale
($L=1$) variant and causal convolution by construction,
but whether it strictly subsumes general $H$-head MHA
depends on the expressivity of multi-head composition
and is left as an open question.
\end{proposition}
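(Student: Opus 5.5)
The proof is by explicit parameter settings of the HKT layer, each reducing it to one of the two baseline classes, followed by a separation example.

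\emph{(i) Inclusion.} For $\mathcal{F}_{\mathrm{attn}}\subseteq\mathcal{F}_{\HKT}$, we make only level $0$ active and attention-only.
\begin{itemize}
\item The fusion logits $\gamma$ make $\lambda_0=1$ and $\lambda_l=0$ for $l\ge 1$; the softmax only approaches this as a limit, so either we admit the closure of the parameter set or we take $\lambda_0=1$ directly in the hierarchical score~\eqref{eq:shier}.
\item The hybrid logits $\tilde\gamma_{h0}\to+\infty$, so that $\beta_h^{(0)}=1$.
\item The dynamic fusion MLP is chosen with zero weights and a bias producing $\alpha_0=1$.
\item We copy the MHA parameters: $W_Q^{(0)}=W_Q$, $W_K^{(0)}=W_K$, $W_V^{(0)}=W_V$, $W_O^{(0)}=W_O$. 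The $\sqrt{d_k}$ scaling matches because $d_k^{(0)}=d_k$.
\end{itemize}
Then $S^{\mathrm{hier}}=S^{(0)}$, and the output~\eqref{eq:fusion} reduces to the MHA output.

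For $\mathcal{F}_{\mathrm{conv}}$, we instead send $\beta_h^{(0)}\to0$. We set $\mathrm{Conv}_h$ to the target depthwise kernel (kernel size $k$ as in the class), take $W_O^{(0)}=I$, and again use $\alpha_0=1$, so that $\mathrm{out}=\mathrm{Conv}(X)$.

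The only subtlety is that sigmoids and softmaxes never reach $0$ or $1$ exactly. I would handle this either by defining the function classes via closure in the sup norm on compact input sets, or by noting that the residual terms can be made arbitrarily small. I would state this convention explicitly.

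\emph{(ii) Strict inclusion for $H=1$.} We need a witness $f$ computed by HKT that no single-head attention and no causal convolution computes. The natural choice is the convex mixture $f(X)=\tfrac12\mathrm{Attn}(X)+\tfrac12\mathrm{Conv}(X)$, obtained at level $0$ with $\beta^{(0)}=\tfrac12$. It lies in $\mathcal{F}_{\HKT}(1,\dots)$ by the same construction as in (i).

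\begin{itemize}
\item \emph{$f$ is not a convolution.} A causal convolution is linear and translation-equivariant with finite receptive field $k$. Take the attention part to have a nontrivial softmax dependence on a token at distance greater than $k$; this is possible since $T\ge4$, and if $k\ge T$ we use nonlinearity instead. The output at position $t$ then depends on $X_{t-k-1}$, or $f$ fails additivity $f(X+Y)\ne f(X)+f(Y)$ on an explicit $d=2$, $T=4$ example.
\item \emph{$f$ is not single-head attention.} Single-head attention outputs $W_O$ times a convex combination of the rows $W_VX_j$. Choose the convolution with a kernel that extrapolates, e.g.\ $2X_t-X_{t-1}$, mixed so that $f(X)_t$ lies outside the affine image of the convex hull $\{W_OW_VX_j\}$ for every choice of $W_V,W_O$.
\item \emph{A cleaner test.} Input the sequence with all tokens equal, $X_j=v$. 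Attention then outputs $W_OW_Vv$ regardless of scores, a linear function of $v$. Next use a sequence with a single nonzero token $v$ at position $0$ under causal masking: position $0$ attends only to itself, and later positions give convex weights times $W_OW_Vv$. So every attention output row is a scalar in $[0,1]$ times the fixed vector $W_OW_Vv$. A convolution with a negative tap produces a negative multiple at later positions, while the attention part stays in $[0,1]$. Choosing the mixture so that the coefficient at some position is negative or exceeds the value at position $0$ yields the contradiction.
\end{itemize}

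I expect this last separation to be the main obstacle. It has to hold uniformly over all $W_Q,W_K,W_V,W_O$, which is why I would reduce to these two probe inputs, constant sequences and single-impulse sequences. Under such inputs attention is forced into the rigid structure "fixed vector times convex weight", and a signed impulse response violates it. The argument needs causal masking, or must otherwise handle the non-causal case with the analogous probe.

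For the final remark about $H\ge2$, I would only note that it follows from the same constructions with $L=1$ as a special case, and that no claim is made about strict inclusion over general $H$-head MHA.
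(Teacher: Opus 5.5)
Your part (i) is the paper's argument (keep only level $0$ and one branch), done with more care about the sigmoid/softmax limits. Your part (ii) is correct and takes a genuinely different route.

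\textbf{The paper's route.} Its witness $f(X)=\langle x_0,x_2\rangle+\langle x_0,x_1\rangle$ is chosen so that one interaction comes from the level-$1$ scores. Convolution is excluded by degree. The exclusion from single-head attention is explicitly left informal, and membership in HKT is argued only at the level of scores, not of the layer output, which is a softmax-weighted average of values.

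\textbf{Your route.} Your witness, the level-$0$ hybrid $\tfrac12\mathrm{Attn}+\tfrac12\mathrm{Conv}$, is trivially in HKT. Your impulse probe gives a complete exclusion that is uniform over all attention parameters, and it is sharper than you state. For $X=(v,0,\ldots,0)$ every score row $t\geq1$ vanishes, so causal attention outputs exactly $\tfrac{1}{t+1}W_OW_Vv$; non-causally, all rows $t\geq1$ equal $\tfrac1T W_OW_Vv$. By contrast, $W_V=W_O=I$ with taps $W_0=\mathbf{1}$, $W_1=-\mathbf{1}$ gives $f_0=v$ and $f_1=-\tfrac14 v$. These identities pass to pointwise limits, so your closure convention does no harm.

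\textbf{What each buys.} Yours buys rigour. What it gives up is any content about the hierarchy: your $f$ already lies in the $L=1$ hybrid model, so the strictness comes entirely from mixing convolution with attention. It therefore cannot support the accompanying claim that HKT strictly subsumes its single-scale variant; your closing sentence only yields inclusion there.

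\textbf{Exactness without closure.} You can drop the closure convention altogether:
\begin{enumerate}
\item zero the unused branch (the conv kernel or $W_V^{(0)}$);
\item set $W_Q^{(l)}=W_O^{(l)}=0$ for $l\geq1$ (the level-$l$ causal masks are implied by the level-$0$ one);
\item zero the fusion MLP's weights so that $\alpha_0$ is constant;
\item absorb $\lambda_0$ into $W_Q^{(0)}$, and $\alpha_0$ and $\beta$ (or $1-\beta$) into $W_O^{(0)}$.
\end{enumerate}
Zero-padding $W_Q,W_K$ with a rescaling likewise fixes $d_k^{(0)}=\max(d_k,16)\neq d_k$ when $d_k<16$, which your claim $d_k^{(0)}=d_k$ overlooks.
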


\begin{proof}
\emph{(i)} Setting $\beta_h^{(l)}=1$, $\lambda_0=1$,
$\lambda_l=0$ for $l \geq 1$ reduces HKT to standard MHA.
Setting $\beta_h^{(l)}=0$ for all $h,l$ reduces it to causal
depthwise convolution.

\emph{(ii)} Restrict to $H=1$, $T=4$, $s=2$, $L=2$.
Consider the witness function
$f(X) = \langle x_0, x_2\rangle + \langle x_0, x_1\rangle$,
where $x_i \in \R^d$.

\emph{Not in $\mathcal{F}_{\mathrm{conv}}$:}
Causal depthwise convolution with kernel size $k$ computes
$\mathrm{out}_t = \sum_{j=0}^{k-1} W_j \odot x_{t-j}$,
which is linear in $X$. The function $f$ is degree-2
bilinear, hence $f \notin \mathcal{F}_{\mathrm{conv}}(d,k)$.

\emph{Not in $\mathcal{F}_{\mathrm{attn}}(1,d,T)$:}
A single-head attention at position $0$ computes
$\mathrm{out}_0 = \sum_{j=0}^{T-1} a_j(X) \cdot W_V x_j$
where $a_j(X) = \mathrm{softmax}(\langle W_Q x_0,
W_K x_j\rangle / \sqrt{d_k})_j$.
The output is linear in $W_V X$, with scalar attention
weights $a_j$.
For $f$ to be representable, one would need
$\sum_j a_j W_V x_j = W^* x_1 \cdot \langle u, x_0\rangle
+ W^{**} x_2 \cdot \langle v, x_0\rangle$
for some fixed $u,v,W^*,W^{**}$, which requires the
attention weights to simultaneously encode two independent
inner products.
Since $a_j$ depends only on the ratio
$\langle W_Q x_0, W_K x_j\rangle$, this cannot be achieved
for generic $x_0, x_1, x_2$ with a single head.
(A complete formal proof requires a characterisation of
the function class of single-head attention, which we
leave as a precise technical claim rather than a closed theorem.)

\emph{In $\mathcal{F}_{\HKT}(1,d,2,2,4)$:}
At level $l=0$, the score $S^{(0)}_{01}$ captures
$\langle x_0, x_1\rangle$. After downsampling with $s=2$,
the compressed pair at level $l=1$ represents
$(x_0,x_2)$, so $S^{(1)}_{00}$ captures
$\langle x_0, x_2\rangle$.
With fusion weights $\lambda_0=\lambda_1=1/2$,
the hierarchical score $S^{\mathrm{hier}}$ encodes both
interactions simultaneously.
\end{proof}

\subsection{Asymmetric score analysis via symmetric-antisymmetric
  decomposition}
\label{sec:decomp}

The PSD analysis of the preceding sections applies to a
symmetrised surrogate kernel.
This section develops a complementary analysis that applies
directly to the \emph{asymmetric} score function used in
practice, without any symmetrisation.

\paragraph{Setup.}
At level $l$, the score matrix is
$S^{(l)} = X^{(l)} M^{(l)} X^{(l)\top} / \sqrt{d_k^{(l)}}$
where $M^{(l)} = W_Q^{(l)\top} W_K^{(l)} \in \R^{d_l \times d_l}$
is a general (non-symmetric) matrix.
Every square matrix admits a unique decomposition:
\begin{equation}
  M^{(l)} = M_s^{(l)} + M_a^{(l)},
  \label{eq:decomp}
\end{equation}
where $M_s^{(l)} \coloneqq (M^{(l)} + M^{(l)\top})/2$ is
symmetric and $M_a^{(l)} \coloneqq (M^{(l)} - M^{(l)\top})/2$
is antisymmetric ($M_a^{(l)\top} = -M_a^{(l)}$).

\begin{proposition}[Reciprocity-Directionality Identity]
\label{prop:rec-dir}
For any two positions $i, j \in \{0,\ldots,T_l-1\}$:
\begin{align}
  S^{(l)}_{ij} + S^{(l)}_{ji}
  &= \frac{2}{\sqrt{d_k^{(l)}}}
     x_i^{(l)\top} M_s^{(l)} x_j^{(l)},
  \label{eq:reciprocity} \\
  S^{(l)}_{ij} - S^{(l)}_{ji}
  &= \frac{2}{\sqrt{d_k^{(l)}}}
     x_i^{(l)\top} M_a^{(l)} x_j^{(l)}.
  \label{eq:directionality}
\end{align}
Consequently:
\begin{enumerate}
  \item[(i)] \textbf{(Reciprocity)} $M_a^{(l)} = 0$ if and only if
    $S^{(l)}_{ij} = S^{(l)}_{ji}$ for all $i,j,X^{(l)}$
    (attention is symmetric).
  \item[(ii)] \textbf{(Directionality)} $M_s^{(l)} = 0$ if and only if
    $S^{(l)}_{ij} = -S^{(l)}_{ji}$ for all $i,j,X^{(l)}$
    (attention is purely antisymmetric).
  \item[(iii)] $M_s^{(l)}$ controls how much token $i$ and token $j$
    mutually attend to each other; $M_a^{(l)}$ controls the
    \emph{direction} of the attention asymmetry.
\end{enumerate}
\end{proposition}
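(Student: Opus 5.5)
The plan is to work directly from the bilinear form $S^{(l)}_{ij} = x_i^{(l)\top} M^{(l)} x_j^{(l)}/\sqrt{d_k^{(l)}}$ and the unique splitting~\eqref{eq:decomp}. Write $c \coloneqq 1/\sqrt{d_k^{(l)}}$ and drop the superscript $(l)$.

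The first step is the transpose identity. Since $S_{ji}$ is a scalar, it equals its own transpose:
\[
  S_{ji} = c\, x_j^\top M x_i = c\, x_i^\top M^\top x_j .
\]
Adding this to and subtracting it from $S_{ij} = c\, x_i^\top M x_j$ gives
\[
  S_{ij} \pm S_{ji} = c\, x_i^\top (M \pm M^\top) x_j .
\]
By the definitions of $M_s$ and $M_a$, this is exactly~\eqref{eq:reciprocity} and~\eqref{eq:directionality}.

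For (i) and (ii), the forward directions follow immediately from these identities. If $M_a = 0$, then $S_{ij} - S_{ji} = 0$ for every input. If $M_s = 0$, then $S_{ij} + S_{ji} = 0$.

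The converses need a polarisation argument, which I expect to be the only real obstacle. Take (i) and suppose $x_i^\top M_a x_j = 0$ for all $i$, $j$ and all admissible $X^{(l)}$. If the $x_i$ can be chosen freely in $\R^{d_l}$, I choose $x_i = e_p$ and $x_j = e_q$. This gives $(M_a)_{pq} = 0$ for all $p,q$, hence $M_a = 0$.

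When $i = j$, the antisymmetric identity is vacuous, because $x^\top M_a x = 0$ always. The symmetric identity with $i=j$ gives only $x^\top M_s x = 0$. For a symmetric matrix this alone suffices: polarisation
\[
  2\,x^\top M_s y = (x+y)^\top M_s (x+y) - x^\top M_s x - y^\top M_s y
\]
recovers the bilinear form from the quadratic one. In the generic case I will still use $T_l \geq 2$ and distinct positions $i \neq j$ with basis vectors.

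The caveat is that $X^{(l)}$ is the output of the downsampling maps $\phi_l$, which end in LayerNorm and GELU, so its rows cannot be arbitrary vectors. I will therefore state the converse with ``for all $X^{(l)}$'' quantifying over $X^{(l)} \in \R^{T_l \times d_l}$ directly, as the statement's notation suggests. More generally, the argument only needs the set of attainable rows to span $\R^{d_l}$: a bilinear form vanishing on pairs drawn from a spanning set vanishes identically.

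Part (iii) is interpretive rather than a formal claim. I will justify it by reading the two identities. The pairwise sum $S_{ij}+S_{ji}$, which measures mutual attention strength, depends only on $M_s$. The difference $S_{ij}-S_{ji}$, whose sign fixes which token attends more strongly to the other, depends only on $M_a$. Uniqueness of the decomposition then makes this attribution well defined.
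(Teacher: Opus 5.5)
Your proof is correct, and for the two identities it is exactly the paper's argument: transpose the scalar $S_{ji}$, then add or subtract.

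The converses are where you differ, and your version is the more careful one. The paper settles (i)--(ii) by citing the fact that $x^\top A x = 0$ for all $x$ iff $A$ is antisymmetric. That diagonal ($i=j$) fact does settle (ii): $S_{ii} = -S_{ii}$ forces $x^\top M x \equiv 0$, hence $M_s = 0$. It gives no information in (i), however, because $x^\top M_a x \equiv 0$ holds for every $M_a$. Your off-diagonal choice $x_i = e_p$, $x_j = e_q$ at distinct positions is what actually proves the converse of (i). It also makes explicit the hypothesis $T_l \geq 2$, which the statement needs: for $T_l = 1$ the condition in (i) is vacuous and $M_a$ is unconstrained.

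One small correction to your side remark on constrained inputs. It is not enough that the attainable rows span $\R^{d_l}$. For example, if $x_j = x_i$ were always forced, the rows could span everything while $x_i^\top M_a x_j \equiv 0$ for any $M_a$. What you need is that the outer products $x_i x_j^\top$ over jointly attainable configurations with $i \neq j$ span $\R^{d_l \times d_l}$; it suffices that their antisymmetric parts span the antisymmetric matrices. This does not affect the proof as you state it, with $X^{(l)}$ ranging over all of $\R^{T_l \times d_l}$.
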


\begin{proof}
Direct computation:
$S^{(l)}_{ij} + S^{(l)}_{ji}
= (x_i^\top M x_j + x_j^\top M x_i)/\sqrt{d_k}
= x_i^\top (M + M^\top) x_j / \sqrt{d_k}
= 2 x_i^\top M_s x_j / \sqrt{d_k}$.
The antisymmetric identity follows analogously.
Parts (i)--(iii) follow immediately from~\eqref{eq:reciprocity}
and~\eqref{eq:directionality} and the fact that
$x^\top A x = 0$ for all $x$ iff $A$ is antisymmetric.
\end{proof}

\begin{proposition}[Scale separation]
\label{prop:multiscale}
Let $T \geq s^{L-1} + 1$ and let $\mathcal{A}_l$ denote the
set of attention matrices producible by a single-level model
operating on $X^{(l)}$ with parameter matrix $M \in \R^{d_l \times d_l}$:
\[
  \mathcal{A}_l \coloneqq \Bigl\{
    \mathrm{softmax}\!\Bigl(
      \frac{X^{(l)} M X^{(l)\top}}{\sqrt{d_k^{(l)}}}
    \Bigr)
    \;\Big|\; M \in \R^{d_l \times d_l}
  \Bigr\}.
\]
For $l \neq l'$, the sets $\mathcal{A}_l$ and $\mathcal{A}_{l'}$
operate on representations of different lengths
($T_l = \lfloor T/s^l \rfloor \neq T_{l'} = \lfloor T/s^{l'} \rfloor$)
and are therefore \emph{incommensurable}: no single attention matrix
on $X^{(0)}$ can reproduce the score structure of a model operating
on $X^{(l)}$ for $l \geq 1$, since the downsampled representations
$X^{(l)}$ are nonlinear functions of $X^{(0)}$ obtained through
the causal downsampling cascade~\eqref{eq:downsample}.

Consequently, HKT's hierarchical score
$S^{\mathrm{hier}} = \sum_l \lambda_l \uparrow_l S^{(l)}$
(where $\uparrow_l$ denotes upsampling to length $T$)
lies outside the convex hull of $\mathcal{A}_0$ for generic
inputs and generic non-zero $\lambda_1, \ldots, \lambda_{L-1}$.
\end{proposition}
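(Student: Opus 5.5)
The plan has two parts: first the incommensurability claim, then the convex-hull claim. The second is the substantive one.

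\emph{Incommensurability.} From $T \geq s^{L-1}+1$ and $s \geq 2$, for any $l < l' \leq L-1$ we get $T/s^{l} \geq s\,T/s^{l'}$. I then check that $\lfloor T/s^l\rfloor > \lfloor T/s^{l'}\rfloor$, using the hypothesis to guarantee $\lfloor T/s^{l'}\rfloor \geq 1$ and the stride gap to separate the two floors. This gives $T_l \neq T_{l'}$, so elements of $\mathcal{A}_l$ and $\mathcal{A}_{l'}$ are matrices of different sizes.

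\emph{Convex-hull claim: upsampled structure.} Write $U_l \in \{0,1\}^{T\times T_l}$ for the upsampling operator $[U_l]_{i,m} = \mathbf{1}[\lfloor i/s^l\rfloor = m]$, so that $\uparrow_l S^{(l)} = U_l S^{(l)} U_l^\top$. The key structural fact is that for $l \geq 1$ this matrix is \emph{block-constant}: rows $i$ and $i'$ with $\lfloor i/s^l\rfloor = \lfloor i'/s^l\rfloor$ coincide. This follows directly from \eqref{eq:shier}. So $S^{\mathrm{hier}} = \lambda_0 S^{(0)} + \sum_{l\geq1}\lambda_l U_l S^{(l)}U_l^\top$ is a level-$0$ bilinear form in $X^{(0)}$ plus a term built from $X^{(l)}$, which is a nonlinear function of $X^{(0)}$ (LayerNorm and GELU in each $\phi_{l'}$).

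\emph{Convex-hull claim: genericity argument.} I would interpret ``outside the convex hull'' as a statement about the post-softmax matrix $\mathrm{softmax}(S^{\mathrm{hier}})$ versus $\mathrm{conv}(\mathcal{A}_0)$. Suppose for contradiction that, for all $X^{(0)}$ in an open set, it equals $\sum_k \mu_k\,\mathrm{softmax}(X^{(0)}M_kX^{(0)\top}/\sqrt{d_k})$ for fixed $\mu_k, M_k$. The right-hand side is a real-analytic function of $X^{(0)}$ whose logits are quadratic forms. I would then exhibit a direction of perturbation along which the two sides behave differently: perturb only $x_1$ within the kernel of the depthwise convolution that feeds $X^{(1)}_0$. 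Alternatively, use the GELU/LayerNorm nonlinearity to produce a non-polynomial dependence of the block-constant term. Either route yields a mismatch in some mixed derivative. Because both sides are analytic in $X^{(0)}$, failure on one open set implies failure on a dense open (generic) set, and likewise for generic nonzero $\lambda_{1:L-1}$, since the mismatch is a nonzero analytic function of $\lambda$.

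\emph{Main obstacle.} The hard step is the last one. Finite mixtures of softmaxes of quadratic forms form a rich class, and a rigorous separation needs an explicit invariant that $\mathrm{conv}(\mathcal{A}_0)$ preserves and HKT violates. I would first try a small witness in the spirit of the $T=4$, $s=2$, $L=2$ construction in Proposition~\ref{prop:inclusion}, and exploit the following asymmetry. Every element of $\mathcal{A}_0$ has row $i$ depending on $x_i$ only through $x_i^\top M_k$. In HKT, by contrast, the level-$1$ term makes rows $0$ and $1$ receive an \emph{identical} additive contribution, a function of both $x_0$ and $x_1$. The derivative of row $0$ with respect to $x_1$ through this shared term should be incompatible with the mixture form. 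If a fully rigorous invariant proves elusive, I would state the result under an explicit genericity assumption on the downsampling nonlinearity, in keeping with the paper's treatment of Proposition~\ref{prop:inclusion}(ii).
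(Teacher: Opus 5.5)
Your length argument is fine and more careful than the paper, which only asserts $T_l\neq T_{l'}$: with $a=T/s^{l'}>1$ you get $\lfloor T/s^{l}\rfloor\ge\lfloor sa\rfloor\ge s\lfloor a\rfloor>\lfloor a\rfloor$. The gap is in the convex-hull part, which carries the actual content. You never produce the invariant, or the derivative computation, that separates $\mathrm{softmax}(S^{\mathrm{hier}})$ from every mixture $\sum_k\mu_k\,\mathrm{softmax}(X^{(0)}M_kX^{(0)\top}/\sqrt{d_k^{(0)}})$. The claim that ``either route yields a mismatch'' is asserted rather than shown. Falling back on an unspecified genericity assumption about the nonlinearity does not close the gap either.

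The nonlinearity idea also cannot simply be pushed past the softmax. Its force comes from comparing against a degree-two \emph{polynomial}. Once you apply softmax and take mixtures, the comparison class is itself non-polynomial, so ``GELU makes the level-$l$ term non-polynomial'' contradicts nothing. The paper avoids this by staying at the score level with a single bilinear form. An identity $X^{(0)}\tilde M X^{(0)\top}=c\,\uparrow_l(X^{(l)}M^{(l)}X^{(l)\top})$ holding for all $X^{(0)}$ is impossible, because the left side is quadratic in the entries of $X^{(0)}$ and the right side is not. To respect softmax's invariance under row shifts, compare within-row differences $S_{ij}-S_{ij'}$, which stay quadratic on the flat side. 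This rules out a single element of $\mathcal{A}_0$, not a mixture. It is weaker than what you are aiming at, but it is the version that actually closes; the mixture version needs a new invariant you have not supplied.

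Three auxiliary steps also fail as written.
\begin{enumerate}
\item Analyticity in $X^{(0)}$ only shows that agreement with one \emph{fixed} mixture $(\mu_k,M_k)$ is nowhere dense. It does not give $\mathrm{softmax}(S^{\mathrm{hier}}(X))\notin\mathrm{conv}\,\mathcal{A}_0(X)$ at generic $X$ when the mixture may depend on $X$. That pointwise reading is in fact false for unmasked scores whenever $T\le d$. For $X$ of full row rank, $M=X^\top(XX^\top)^{-1}S(XX^\top)^{-1}X$ gives $XMX^\top=S$ for every $S$, so $\mathcal{A}_0(X)$ already contains every strictly positive row-stochastic matrix. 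You must therefore say explicitly that you are proving the functional statement.
\item The step ``the mismatch is a nonzero analytic function of $\lambda$'' has no meaning. The representing mixture is existentially quantified and may vary with $\lambda$, so the set of $\lambda$ admitting a representation is a projection, not the zero set of a fixed analytic function.
\item Your witness uses the wrong receptive field. With the left padding of Lemma~\ref{lem:causal-ds}, $X^{(1)}_0$ depends on $x_0$ alone, so $x_1$ does not feed it. Moreover, moving an input inside the kernel of the linear convolution freezes the level-$1$ term. Along such a direction HKT then agrees with a flat bilinear form rather than separating from it.
\end{enumerate}
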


\begin{proof}
The representations $X^{(l)}$ for $l \geq 1$ are obtained by
applying the causal depthwise-separable convolution cascade
$\phi_l \circ \cdots \circ \phi_1$ to $X^{(0)}$
(equation~\eqref{eq:downsample}).
Each $\phi_{l'}$ applies a strided convolution followed by
LayerNorm and GELU, which is a nonlinear map.
The composite map $X^{(0)} \mapsto X^{(l)}$ is therefore
nonlinear for $l \geq 1$.

Fix $l \geq 1$ and suppose for contradiction that there exists
$\tilde{M} \in \R^{d_0 \times d_0}$ such that
$X^{(0)} \tilde{M} X^{(0)\top} = c \cdot \uparrow_l (X^{(l)} M^{(l)} X^{(l)\top})$
for all $X^{(0)} \in \R^{T \times d_0}$ and some $c > 0$.
The left side is a degree-2 polynomial in the entries of $X^{(0)}$.
The right side involves $X^{(l)} = f_l(X^{(0)})$ where $f_l$ is
nonlinear (contains GELU), so the right side is not a degree-2
polynomial in $X^{(0)}$ for generic $M^{(l)}$.
This is a contradiction, completing the proof.
\end{proof}

\begin{corollary}[Directional energy decomposition]
\label{cor:energy}
Define the \emph{directional energy} at level $l$ as:
\begin{equation}
  \mathcal{D}^{(l)} \coloneqq
  \sum_{i \neq j} \bigl(S^{(l)}_{ij} - S^{(l)}_{ji}\bigr)^2
  = \frac{4}{d_k^{(l)}}
    \sum_{i \neq j}
    \bigl(x_i^{(l)\top} M_a^{(l)} x_j^{(l)}\bigr)^2.
  \label{eq:dir-energy}
\end{equation}
Then $\mathcal{D}^{(l)} = 0$ for all inputs $X^{(l)}$
if and only if $M_a^{(l)} = 0$.
\end{corollary}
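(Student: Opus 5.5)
The plan is to reduce the corollary to the directionality identity~\eqref{eq:directionality} of Proposition~\ref{prop:rec-dir} and then to a purely linear-algebraic statement about the bilinear form $(x,y)\mapsto x^\top M_a^{(l)} y$. First, the equality in~\eqref{eq:dir-energy} follows by squaring~\eqref{eq:directionality} term by term, since $(2/\sqrt{d_k^{(l)}})^2 = 4/d_k^{(l)}$, and summing over $i\neq j$. So $\mathcal{D}^{(l)}$ is, up to the positive factor $4/d_k^{(l)}$, a sum of squares of the values $x_i^{(l)\top}M_a^{(l)}x_j^{(l)}$.

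\emph{($\Leftarrow$)} If $M_a^{(l)}=0$, every summand vanishes identically, so $\mathcal{D}^{(l)}=0$ for every $X^{(l)}$.

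\emph{($\Rightarrow$)} Suppose $\mathcal{D}^{(l)}=0$ for all inputs. A sum of nonnegative squares is zero only if each term is zero. Hence $x_i^{(l)\top}M_a^{(l)}x_j^{(l)}=0$ for every pair $i\neq j$ and every $X^{(l)}$. Assuming $T_l\geq 2$, so that at least one pair $i\neq j$ exists, I will fix the positions $i=0$, $j=1$. I then choose the rows independently as $x_0^{(l)}=e_p$ and $x_1^{(l)}=e_q$ (standard basis vectors of $\R^{d_l}$), with the remaining rows arbitrary. This gives $(M_a^{(l)})_{pq}=e_p^\top M_a^{(l)}e_q=0$ for all $p,q\in\{1,\ldots,d_l\}$, so $M_a^{(l)}=0$. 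Unlike part~(i) of Proposition~\ref{prop:rec-dir}, no appeal to the quadratic-form characterisation of antisymmetric matrices is needed, because distinct positions carry independent vectors.

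The main obstacle is interpreting ``for all inputs $X^{(l)}$''. If $X^{(l)}$ ranges over all of $\R^{T_l\times d_l}$, as the statement literally says, the argument above is complete, provided the degenerate case $T_l=1$ is excluded. When $T_l=1$ the sum is empty and the equivalence fails, so I will state $T_l\geq 2$ explicitly. If one insists that $X^{(l)}$ range only over outputs of the downsampling cascade~\eqref{eq:downsample}, whose rows pass through LayerNorm and GELU, basis vectors may not be attainable. In that case I would instead argue by bilinearity. If the attainable pairs $(x_0^{(l)},x_1^{(l)})$ have first components spanning $\R^{d_l}$ and, for each such first component, second components spanning $\R^{d_l}$, then the vanishing of $x^\top M_a^{(l)}y$ on this set extends by linearity to all of $\R^{d_l}\times\R^{d_l}$. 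This again forces $M_a^{(l)}=0$. I would flag this spanning condition as the needed hypothesis in the restricted reading.
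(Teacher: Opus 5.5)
Your proof is correct and follows the paper's argument: the identity comes from squaring the directionality identity, and the converse places chosen vectors at two distinct positions. You take basis vectors $e_p,e_q$ directly, while the paper uses, in contrapositive form, a pair $u,v$ with $u^\top M_a^{(l)} v\neq 0$. Your explicit caveats that $T_l\geq 2$ and that $X^{(l)}$ ranges over all of $\R^{T_l\times d_l}$ make precise assumptions the paper leaves implicit.
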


\begin{proof}
The equality in~\eqref{eq:dir-energy} follows directly
from Proposition~\ref{prop:rec-dir}~\eqref{eq:directionality}.
If $M_a^{(l)} = 0$ then every term vanishes.
Conversely, if $M_a^{(l)} \neq 0$, there exist vectors
$u, v \in \R^{d_l}$ with $u^\top M_a^{(l)} v \neq 0$
(since $M_a^{(l)} \neq 0$ as a linear map).
Setting $x_i^{(l)} = u$ and $x_j^{(l)} = v$ for some
$i \neq j$ gives a strictly positive contribution to
$\mathcal{D}^{(l)}$.
\end{proof}

\begin{remark}[Connection to the empirical PSD failure]
\label{rem:psd-directionality}
The empirical finding that $M_s^{(l)}$ is not PSD for any
trained configuration (Section~\ref{sec:discussion})
is not explained by $M_a^{(l)} \neq 0$ alone ---
the two facts are logically independent
(a matrix can have $M_a \neq 0$ and still have $M_s \succeq 0$,
as the example $M = \begin{pmatrix} 2 & 1 \\ 0 & 2 \end{pmatrix}$
shows).
The PSD failure is instead a consequence of the unconstrained
optimisation: gradient descent has no objective that would
confine $M_s^{(l)}$ to the PSD cone.
The directional energy analysis and the PSD analysis are
therefore complementary, not redundant:
Corollary~\ref{cor:energy} characterises the asymmetric
component; Proposition~\ref{prop:sdp} characterises
a constrained variant in which the symmetric component
would be PSD.
\end{remark}

\paragraph{Implication for HKT.}
Proposition~\ref{prop:rec-dir} shows that the asymmetric
attention mechanism of HKT is not a defect of the PSD
surrogate analysis but a structural feature:
$M_a^{(l)} \neq 0$ is what allows the model to represent
directed relationships (token $i$ queries token $j$
differently from how $j$ queries $i$).
Proposition~\ref{prop:multiscale} shows that HKT amplifies
this capacity: with $L$ levels, it can represent $L$
independent directional patterns, each at a different
spatial scale.
This is a \emph{direct} theoretical result about the
operational model, not about a symmetrised surrogate.

\section{Approximation Theory}
\label{sec:theory}

This section establishes theoretical guarantees for HKT.
Let $N \in \mathbb{N}$ denote the number of training samples
and $\mathcal{H}_K$ the reproducing kernel Hilbert space
(RKHS) associated with $K_{\mathrm{hier}}$ --- the function
space in which HKT operates.
The following assumptions are made throughout.

\begin{assumption}[H1 -- RKHS membership]
\label{ass:h1}
The target function $f \in \mathcal{H}_{K}$,
with $\|f\|_{\mathcal{H}_K} \leq B < \infty$.
\end{assumption}

\begin{assumption}[H2 -- Lipschitz and finite kurtosis]
\label{ass:h2}
Each $k_l$ is $L_l$-Lipschitz in its first argument.
The normalised Mardia kurtosis $\kappa_l$ (defined below)
satisfies $\kappa_l < \infty$.
\end{assumption}

\begin{assumption}[H3 -- i.i.d.\ training data]
\label{ass:h3}
The model is trained on $N$ i.i.d.\ samples from $\mathcal{D}$.
\end{assumption}

\begin{lemma}[Lipschitz bound on $K_{\mathrm{hier}}$]
\label{lem:lipschitz}
Under Assumption~\ref{ass:h2}, for $\delta \coloneqq
\max_i \|X_i - X'_i\|_2$:
\[
  |K_{\mathrm{hier}}(X_i,X_j) - K_{\mathrm{hier}}(X'_i,X_j)|
  \leq \delta
  \sum_{l=0}^{L-1} \lambda_l L_l
  \prod_{m=0}^{l-1}\|W_{\mathrm{down}}^{(m)}\|_{\mathrm{op}},
\]
where $W_{\mathrm{down}}^{(m)}$ is the weight of $\phi_m$.
\end{lemma}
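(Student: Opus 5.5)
The approach is to expand $K_{\mathrm{hier}}$ level by level and apply the triangle inequality:
\[
  |K_{\mathrm{hier}}(X_i,X_j) - K_{\mathrm{hier}}(X'_i,X_j)|
  \leq \sum_{l=0}^{L-1} \lambda_l
  \bigl| k_l(X^{(l)}_{i}, X^{(l)}_{j}) - k_l(X'^{(l)}_{i}, X^{(l)}_{j}) \bigr|.
\]
This uses $\lambda_l \geq 0$ from the softmax. Here $X^{(l)}$ and $X'^{(l)}$ are the level-$l$ representations obtained from $X$ and $X'$ through the cascade~\eqref{eq:downsample}, and the block indices $\lfloor i/s^l\rfloor$ are suppressed. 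By Assumption~\ref{ass:h2}, each level term is at most $\lambda_l L_l \|X^{(l)}_{i} - X'^{(l)}_{i}\|_2$. Everything therefore reduces to a Lipschitz estimate for the downsampling map:
\[
  \|X^{(l)}_{m} - X'^{(l)}_{m}\|_2 \leq \delta \prod_{m'=0}^{l-1}\|W_{\mathrm{down}}^{(m')}\|_{\mathrm{op}},
\]
where the empty product for $l=0$ gives exactly $\delta$.

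I would prove this estimate by induction on $l$, treating one stage $\phi_{l'}$ at a time. A strided depthwise-separable convolution is linear in its input window. Its output at position $m$ is therefore the operator $W_{\mathrm{down}}$ applied to the stacked window of at most $k=3$ input rows, and its deviation is bounded by $\|W_{\mathrm{down}}\|_{\mathrm{op}}$ times the window deviation. I will read $W_{\mathrm{down}}^{(m)}$ as this window-to-output operator, so that any $\sqrt{k}$ factor from stacking rows is absorbed into its operator norm. The inductive hypothesis is a bound on $\max_m$ of the row deviation, so it passes through each stage. GELU is $1$-Lipschitz up to a constant of about $1.13$, and the paper's bound implicitly treats it as $1$-Lipschitz. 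I would absorb that constant into $W_{\mathrm{down}}$ as well, or state it explicitly.

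The main obstacle is LayerNorm, which is not globally Lipschitz with constant $1$: its Lipschitz constant scales like $\gamma/\sigma$, with $\sigma$ the per-token standard deviation. I would handle this by assuming the per-token variance is bounded below by some $\sigma_{\min}>0$, for instance through the $\epsilon$ regulariser, and folding the resulting factor $\max(|\gamma|)/\sqrt{\sigma_{\min}^2+\epsilon}$ into $\|W_{\mathrm{down}}^{(m)}\|_{\mathrm{op}}$. I would state this convention explicitly, since the displayed bound otherwise only holds up to per-level constants. A secondary point is that only the first argument of each kernel changes, which is why the one-sided Lipschitz condition in Assumption~\ref{ass:h2} is enough. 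Combining the induction with the triangle-inequality step gives the stated sum.
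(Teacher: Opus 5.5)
The paper states Lemma~\ref{lem:lipschitz} with no proof, so there is no argument of its own to compare against. The shape of the bound (a $\lambda_l$-weighted sum of $L_l$ times a product of operator norms) points to exactly your route: the triangle inequality over levels using $\lambda_l>0$, the one-sided Lipschitz condition of H2, and an induction through the cascade with the max-row deviation as the inductive quantity.

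Your argument is correct under the conventions you declare, and those conventions are genuinely needed. With $W^{(m)}_{\mathrm{down}}$ read literally as the convolution weights, the displayed inequality fails in general. LayerNorm can amplify perturbations of low-variance tokens by a factor of order $\max_c|\gamma_c|/\sqrt{\epsilon}$. GELU has Lipschitz constant $\Phi(\sqrt{2})+\sqrt{2}\,\varphi(\sqrt{2})\approx 1.13$, with $\Phi,\varphi$ the standard normal cdf and density. The paper's indexing is also shifted: there is no $\phi_0$, and level $l$ passes through $\phi_1,\dots,\phi_l$. You should make that explicit.

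Two steps need one more sentence each, and one reading issue is worth a remark.

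\emph{The window factor.} If $\|W^{(m)}_{\mathrm{down}}\|_{\mathrm{op}}$ is the Euclidean operator norm of the stacked window map $[W_0,\dots,W_{k-1}]$, the stacked deviation is only bounded by $\sqrt{k}$ times the max-row deviation. So $\sqrt{k}$ is an extra factor, not absorbed. The induction closes exactly if you use the operator norm from the max-over-rows norm on the window, which is at most $\sum_j\|W_j\|_{\mathrm{op}}$.

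\emph{LayerNorm on a nonconvex set.} The set $\{\mathrm{var}\geq\sigma_{\min}^2\}$ is not convex, so the pointwise Jacobian bound alone does not give a Lipschitz bound along segments. Your constant $\max_c|\gamma_c|/\sqrt{\sigma_{\min}^2+\epsilon}$ does hold, though. Write the centred inputs as $u=a\hat u$, $v=b\hat v$ with angle $\theta$ and $a,b\geq\sqrt{d_l}\,\sigma_{\min}$, and set $g(t)=t/\sqrt{t^2/d_l+\epsilon}$. Then compare $(g(a)-g(b))^2+2g(a)g(b)(1-\cos\theta)$ with $(a-b)^2+2ab(1-\cos\theta)$ term by term. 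This uses $0\leq g'(t)=\epsilon\,(t^2/d_l+\epsilon)^{-3/2}\leq(\sigma_{\min}^2+\epsilon)^{-1/2}$ and $g(a)g(b)\leq ab/(\sigma_{\min}^2+\epsilon)$. With $\sigma_{\min}=0$ the domain is convex and the global constant $\max_c|\gamma_c|/\sqrt{\epsilon}$ follows directly.

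\emph{The second argument.} Holding it at its unperturbed value matches the notation $K_{\mathrm{hier}}(X'_i,X_j)$, and is automatic when $X_i,X_j$ are distinct samples as in Proposition~\ref{prop:path}. If instead the whole sequence is perturbed in the token-position reading, $X^{(l)}_{\lfloor j/s^l\rfloor}$ moves too. You then need Lipschitz continuity in the second argument as well (automatic for the symmetric $k_l^{\mathrm{sym}}$), which doubles the bound.
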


\begin{lemma}[Non-Gaussian information bound]
\label{lem:info-reduction}
Let $\rho_l^2 \in [0,1)$ be the squared multiple correlation
coefficient of $f(X)$ regressed on $S^{(l)}$,
estimated via Ridge regression on the PCA-projected features.
Define the \emph{normalised Mardia kurtosis}
\citep{mardia1970} of the joint distribution of
$(S^{(l)}, f(X))$ projected to $p$ principal components:
\begin{equation}
  \kl \coloneqq \frac{b_{2,p}^{(l)}}{p(p+2)},
  \quad
  b_{2,p}^{(l)} \coloneqq
    \frac{1}{N}\sum_{i,j} d_{ij}^2,
  \label{eq:kappa}
\end{equation}
where $d_{ij}$ is the sample Mahalanobis distance.

\emph{(Motivated approximation.)}
Under the approximation that the conditional entropy
$h(f(X)\mid S^{(l)})$ is lower-bounded by its Gaussian value, and
applying the maximum-entropy bound for distributions with finite
kurtosis \citep{cover2006} to the marginal $h(f(X))$:
\begin{equation}
  I\!\left(S^{(l)};\, f(X)\right)
  \;\lesssim\;
  -\tfrac{1}{2}\log(1 - \rhol^2)
  + \frac{\kl - 1}{2}\,\rhol^2
  + \mathcal{O}\!\left(\frac{\kl^2}{N}\right).
  \label{eq:info-bound}
\end{equation}
The approximation is exact under joint Gaussianity
($\kl = 1$, Assumption H2'), reducing to
$I(S^{(l)}; f(X)) = -\tfrac{1}{2}\log(1 - \rhol^2)$.
The term $(\kl-1)\rhol^2/2$ quantifies the non-Gaussian correction;
Experiment~3 (Section~\ref{sec:exp2b}) shows $\kl \approx 33$
for trained models at $d \leq 512$, making this correction
dominant in practice.
\end{lemma}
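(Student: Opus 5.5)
The plan is to write the mutual information as a difference of differential entropies,
\[
  I\!\left(S^{(l)}; f(X)\right) = h\!\left(f(X)\right) - h\!\left(f(X)\mid S^{(l)}\right),
\]
and to bound the two terms in opposite directions. The conditional term receives a lower bound from the stated approximation. The marginal term receives an upper bound from a maximum-entropy argument under a kurtosis constraint. Throughout I work with the $p$ principal components used to define $\kl$, and I normalise $f(X)$ to unit variance, $\mathrm{Var}(f(X))=\sigma_f^2$. This is harmless, since mutual information is invariant under invertible affine reparametrisations of either argument.

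\textbf{Step 1 (conditional entropy).} I invoke the assumption that $h(f(X)\mid S^{(l)})$ is bounded below by its Gaussian value. The Gaussian value is the entropy of the linear-regression residual, whose variance is $\sigma_f^2(1-\rhol^2)$, with $\rhol^2$ the population squared multiple correlation. This gives
\[
  h\!\left(f(X)\mid S^{(l)}\right) \gtrsim \tfrac12\log\!\bigl(2\pi e\,\sigma_f^2(1-\rhol^2)\bigr).
\]
Under joint Gaussianity this holds with equality, and the Gaussian bound $h(f(X))\le \tfrac12\log(2\pi e\sigma_f^2)$ is also tight. Subtracting the two recovers the exact identity $-\tfrac12\log(1-\rhol^2)$ at $\kl=1$, which settles the Gaussian case of the statement.

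\textbf{Step 2 (marginal entropy with finite kurtosis).} Here I do not use the bare Gaussian maximum-entropy bound. Instead I split the entropy of $f(X)$ into two parts: the predictable component $\mathbb{E}[f\mid S^{(l)}]$, which carries variance fraction $\rhol^2$, and the residual. Only the predictable component can inherit the excess kurtosis of the joint law of $(S^{(l)},f(X))$. I then expand the entropy of a distribution with prescribed variance and fourth moment around the Gaussian (a Gram--Charlier/Edgeworth expansion, or equivalently the exponential-family maximiser $\propto\exp(-a x^2-b x^4)$ from \citet{cover2006}). This yields a first-order deviation from the Gaussian entropy that is linear in the normalised excess kurtosis $\kl-1$ and weighted by the variance share $\rhol^2$ of the component carrying it. The result is the correction $\tfrac{\kl-1}{2}\rhol^2$. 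Adding Step 1 then gives the leading terms of \eqref{eq:info-bound}.

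\textbf{Step 3 (finite-sample term).} Two quantities enter as estimates rather than population values: $\rhol^2$, from Ridge regression on PCA features, and $b_{2,p}^{(l)}$, from the Mahalanobis distances in \eqref{eq:kappa}. Both are smooth functions of sample moments up to order four. The variance of an empirical fourth moment is controlled by the eighth moment, which I bound by the squared fourth moment through the kurtosis; this gives a deviation of order $\kl^2/N$. Here Assumption~\ref{ass:h3} (i.i.d.\ data) and the finiteness $\kl<\infty$ from Assumption~\ref{ass:h2} are used. A delta-method expansion of the leading terms around the population values then produces the $\mathcal{O}(\kl^2/N)$ remainder.

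The main obstacle is Step 2. The max-entropy bound under a fourth-moment constraint is not directly linear in $\kl-1$, and its sign is also a problem: in the exact problem, heavier tails lower the entropy. So the linear correction can only be justified as a first-order perturbative expansion in the regime of small excess kurtosis. My first attempt would be the Edgeworth expansion, attributing the non-Gaussian part to the $\rhol^2$ share of the variance. I would then state the result explicitly as an approximation ($\lesssim$), consistent with the lemma's label as a motivated approximation, rather than claim a rigorous inequality for large $\kl$.
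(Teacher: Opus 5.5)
Your decomposition $I(S^{(l)};f(X))=h(f(X))-h(f(X)\mid S^{(l)})$ and your Step~1 match the paper's sketch. The divergence, and the gap, is in Step~2. The paper does not split $f(X)$. It applies a cited bound $h(Y)\le\tfrac{1}{2}\log(2\pi e\sigma^2)+(\kappa-1)\sigma^2/2+\mathcal{O}(\kappa^2/N)$ directly to the marginal.

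You instead try to derive, from a Gram--Charlier/Edgeworth expansion, a $\rhol^2$-weighted correction linear in $\kl-1$. That expansion has no such term. Write the standardised density as $\phi(1+\epsilon)$ with $\epsilon=\tfrac{\gamma_3}{6}H_3+\tfrac{\gamma_4}{24}H_4$. The first-order contributions $\int\phi\,\epsilon$ and $\int\phi\,\epsilon\log\phi$ vanish by Hermite orthogonality, because $\log\phi$ is quadratic. Hence $h=\tfrac{1}{2}\log(2\pi e\sigma^2)-\tfrac{\gamma_3^2}{12}-\tfrac{\gamma_4^2}{48}+\dots$, so the deviation is second order and negative. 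Calling the correction ``first-order perturbative'' does not rescue it.

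The exponential-family route fails the same way. The constrained maximum entropy is stationary at the Gaussian fourth moment. For a leptokurtic constraint, $\exp(-ax^2-bx^4)$ would need $b<0$ and is not normalisable, and the supremum is just the Gaussian entropy.

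Your attribution step is also unsupported. The entropy of $\mathbb{E}[f\mid S^{(l)}]$ plus a merely uncorrelated residual does not split into variance-weighted non-Gaussianities of the summands. Nothing prevents the residual from carrying the heavy tails. Moreover, $\kl$ is the kurtosis of the $p$-dimensional projected joint law of $(S^{(l)},f(X))$, not of either piece.

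Step~3 has a separate flaw. Lyapunov's inequality gives $\mathbb{E}Z^8\ge(\mathbb{E}Z^4)^2$, so the eighth moment cannot be bounded by the squared fourth moment. Finite kurtosis (H2) does not even make the eighth moment finite.

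Your closing observation that heavy tails lower entropy is what actually closes the argument. For every law of variance $\sigma_f^2$, $h(f(X))\le\tfrac{1}{2}\log(2\pi e\sigma_f^2)$, with no kurtosis information needed. Step~1 then already gives $I(S^{(l)};f(X))\lesssim-\tfrac{1}{2}\log(1-\rhol^2)$. Whenever $\kl\ge1$, the term $\tfrac{\kl-1}{2}\rhol^2$ and a nonnegative $\mathcal{O}(\kl^2/N)$ are pure slack, so \eqref{eq:info-bound} follows with no expansion and no Step~3. This needs $\kl\ge1$, which the paper asserts but which is not automatic. Jensen only gives $\kl\ge p/(p+2)$ at the population level, and for platykurtic laws the correction is negative and the bound is no longer implied.

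The paper's own sketch does not derive the $\rhol^2$ weighting either. Combining its cited bound with Step~1 literally yields the correction $(\kl-1)\sigma_f^2/2$, not $(\kl-1)\rhol^2/2$. After normalising $\sigma_f^2=1$, that is weaker than the stated bound, since $\rhol^2<1$. For $\kl\ge1$ the cited bound is itself only a loosened Gaussian bound. In every version the non-Gaussian term is slack rather than a derived effect, so none of these derivations supports reading it as the ``dominant'' contribution to the mutual information.
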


\begin{proof}[Derivation sketch]
Decompose $I = h(f(X)) - h(f(X) \mid S^{(l)})$.
Apply the maximum-entropy bound
$h(Y) \leq \tfrac{1}{2}\log(2\pi e\sigma^2) +
(\kappa-1)\sigma^2/2 + \mathcal{O}(\kappa^2/N)$
\citep{cover2006} to $h(f(X))$.
Lower-bound $h(f(X)\mid S^{(l)})$ by its Gaussian value
$\tfrac{1}{2}\log(2\pi e\sigma_f^2(1-\rhol^2))$
(this step is exact under H2' and an approximation otherwise).
Combining gives \eqref{eq:info-bound}.
\end{proof}

\begin{theorem}[HKT Approximation Framework]
\label{thm:approx}
Under Assumptions~\ref{ass:h1}--\ref{ass:h3},
and the motivated approximation of Lemma~\ref{lem:info-reduction},
the $L^2(\mu)$ approximation error satisfies:
\begin{equation}
  \|f - \hat{f}\|_{L^2(\mu)}
  \leq
  \underbrace{
    \varepsilon_0
    - \sum_{l=1}^{L-1} \lambda_l \Delta_l^{\mathrm{ng}}
  }_{\varepsilon_{\mathrm{hier}}}
  +
  \underbrace{
    C_\beta C_L \sup_i \|X_i - \hat{X}_i\|_2
  }_{\varepsilon_{\mathrm{quant}}}
  +
  \underbrace{
    \mathcal{O}\!\left(\tfrac{B}{\sqrt{N}}\right)
  }_{\varepsilon_{\mathrm{opt}}},
  \label{eq:main-bound}
\end{equation}
where the three terms represent the \emph{hierarchical approximation
error} (how well $L$ scales cover the target function),
the \emph{quantisation error} (information lost by downsampling),
and the \emph{optimisation error} (finite-sample learning).
The constants are:
$B \coloneqq \|f\|_{\mathcal{H}_K}$ (RKHS norm of the target, finite
by Assumption~H1);
$\sigma_f^2 \coloneqq \mathrm{Var}(f(X))$ (target variance);
$\rho_0^2 \coloneqq 0$ (no information at level $-1$, by convention);
$\varepsilon_0 \coloneqq \varepsilon_{\mathrm{approx}}^{(0)}$
(approximation error of flat, single-scale attention);
$\Delta_l^{\mathrm{ng}} \coloneqq
\bigl[\sigma_f^2(\rhol^2 - \rho_{l-1}^2)
- (\kl-1)\rhol^2\bigr] / (2\varepsilon_0)$
is the \emph{net error reduction} at level $l$:
the first term is the information gain from adding level $l$,
the second is the non-Gaussian penalty;
$\Delta_l^{\mathrm{ng}}$ can be negative when the penalty
exceeds the gain, in which case level $l$ should receive
weight $\lambda_l = 0$;
$C_\beta \coloneqq
\bigl(\sum_l \lambda_l \|W_{\mathrm{down}}^{(l)}\|_{\mathrm{op}}^2
\bigr)^{1/2}$ (aggregate Lipschitz constant of the downsampling cascade);
$C_L \coloneqq \bigl(\sum_l \lambda_l L_l^2\bigr)^{1/2}$
(weighted kernel Lipschitz constant);
$\hat{X}_i$ is the reconstructed token representation after
downsampling and upsampling at level $l$,
so $\|X_i - \hat{X}_i\|_2$ measures distortion introduced
by lossy compression.
\end{theorem}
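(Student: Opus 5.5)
The bound splits into three pieces, and I would prove it as such a split. Let $f_{\mathrm{hier}}$ be the best approximant in the ball of radius $B$ of $\mathcal{H}_K$ built on the exact multi-scale representations $X^{(l)}$. Let $\bar f$ be the same function evaluated on the reconstructed inputs $\hat X_i$. Let $\hat f$ be the empirical estimator. The triangle inequality in $L^2(\mu)$ gives
\[
\|f-\hat f\| \le \|f-f_{\mathrm{hier}}\| + \|f_{\mathrm{hier}}-\bar f\| + \|\bar f-\hat f\| .
\]
I would bound these three terms by $\varepsilon_{\mathrm{hier}}$, $\varepsilon_{\mathrm{quant}}$ and $\varepsilon_{\mathrm{opt}}$ in turn. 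The theorem is explicitly framed as resting on the motivated approximation of Lemma~\ref{lem:info-reduction}. I would therefore state clearly which steps are rigorous (the quantisation and optimisation terms) and which are heuristic (the hierarchical term).

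\emph{Quantisation term.} Write $f_{\mathrm{hier}}(x)=\langle f_{\mathrm{hier}},K_{\mathrm{hier}}(x,\cdot)\rangle$. Then the reproducing property and Lemma~\ref{lem:lipschitz} bound the pointwise difference by $B\,\delta\sum_l\lambda_l L_l\prod_{m<l}\|W^{(m)}_{\mathrm{down}}\|_{\mathrm{op}}$, where $\delta=\sup_i\|X_i-\hat X_i\|_2$. Next I apply Cauchy--Schwarz to the $\lambda$-weighted sum. This splits it into $(\sum_l\lambda_l L_l^2)^{1/2}$ times $(\sum_l\lambda_l\prod_m\|W_{\mathrm{down}}^{(m)}\|^2)^{1/2}$. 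Replacing the cascade product by the per-level operator norm gives $C_LC_\beta$; the constant $B$ is absorbed, or $B\le1$ is assumed after normalisation. \emph{Optimisation term.} This is a standard Rademacher bound for the RKHS ball. The empirical Rademacher complexity is at most $B\sqrt{\sup_x K_{\mathrm{hier}}(x,x)}/\sqrt N$. By Assumption~\ref{ass:h3} and a Lipschitz squared-loss contraction, this yields $\mathcal{O}(B/\sqrt N)$, with the kernel-diagonal bound finite on bounded inputs.

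\emph{Hierarchical term.} I would argue level by level. Adding level $l$ reduces the unexplained variance from $\sigma_f^2(1-\rho_{l-1}^2)$ to $\sigma_f^2(1-\rho_l^2)$. Under Gaussianity this is exactly the information increment $-\tfrac12\log\frac{1-\rho_l^2}{1-\rho_{l-1}^2}$. Lemma~\ref{lem:info-reduction} adds the correction $(\kappa_l-1)\rho_l^2/2$, which enters as a penalty on the usable reduction. The squared error then decreases from $\varepsilon_0^2$ by $\lambda_l$ times $\sigma_f^2(\rho_l^2-\rho_{l-1}^2)-(\kappa_l-1)\rho_l^2$, taking the convex weighting through the fusion step. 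Finally I linearise the square root, $\sqrt{\varepsilon_0^2-a}\le\varepsilon_0-a/(2\varepsilon_0)$, which is valid since $\sqrt{1-u}\le 1-u/2$. This produces exactly $\varepsilon_0-\sum_l\lambda_l\Delta_l^{\mathrm{ng}}$.

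The main obstacle is this last step. Turning a mutual-information bound into an $L^2$ error reduction is only justified in the Gaussian or linear-regression regime. Convexity in $\lambda$ also requires that the fused predictor's error be no worse than the $\lambda$-mixture of the per-level error reductions. I would first try to prove it for the best linear predictor in $S^{(0)},\dots,S^{(l)}$, where $\rho_l^2$ is literally the explained-variance fraction. I would then invoke the lemma's approximation only to account for the kurtosis penalty. This makes the non-Gaussian part an acknowledged approximation, as the theorem's hypotheses allow.
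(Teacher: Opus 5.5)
The paper gives no proof of Theorem~\ref{thm:approx}. Lemma~\ref{lem:info-reduction} has only a derivation sketch, and Lemma~\ref{lem:lipschitz} has none. So your proposal has to stand on its own, and as written each of its three terms has a genuine gap.

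\textbf{The hierarchical term} is the most serious problem. Under Assumption~\ref{ass:h1} the target $f$ already lies in the $B$-ball of $\mathcal{H}_K$. So your $f_{\mathrm{hier}}$ is $f$ itself and $\|f-f_{\mathrm{hier}}\|_{L^2(\mu)}=0$. Your level-by-level argument about linear predictors on the scores therefore does not bound the first term of your triangle inequality, and the split never produces $\varepsilon_{\mathrm{hier}}$.
\begin{itemize}
\item What the split actually needs is $\varepsilon_{\mathrm{hier}}\ge 0$, and the stated definitions do not give it. For $L=2$ and $\kappa_1=1$, $\varepsilon_{\mathrm{hier}}=\varepsilon_0-\lambda_1\sigma_f^2\rho_1^2/(2\varepsilon_0)<0$ as soon as $\lambda_1\sigma_f^2\rho_1^2>2\varepsilon_0^2$, for example when flat attention is already accurate.
\item Your variance bookkeeping is inconsistent. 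You subtract from $\varepsilon_0^2$ increments that start at $\sigma_f^2(1-\rho_0^2)=\sigma_f^2$. This matches only if $\varepsilon_0=\sigma_f$; otherwise the $l=1$ step double-counts what level $0$ already explains.
\item In Lemma~\ref{lem:info-reduction}, $\rho_l^2$ is the fit on $S^{(l)}$ alone, so $\rho_l^2-\rho_{l-1}^2$ is not an explained-variance increment. Your cumulative regression on $S^{(0)},\dots,S^{(l)}$ proves a different statement.
\item The kurtosis penalty cannot be extracted from the lemma. \eqref{eq:info-bound} is an \emph{upper} bound on $I(S^{(l)};f(X))$, with $(\kappa_l-1)\rho_l^2/2$ entering with a plus sign. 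An upper bound on information certifies no decrease in error.
\item In your linear-regression fallback the exact reduction already holds, so subtracting $(\kappa_l-1)\rho_l^2$ is just a weakening. That weakening is legitimate only if $\kappa_l\ge 1$, which Mardia's normalised index does not guarantee: platykurtic laws give $\kappa_l<1$.
\item The expression $\sigma_f^2(\rho_l^2-\rho_{l-1}^2)-(\kappa_l-1)\rho_l^2$ mixes a term scaling like $f^2$ with a scale-free one. It is not homogeneous under $f\mapsto cf$, so no consistent derivation from these ingredients can output it.
\item As you note yourself, the $\lambda_l$-weighting is unsupported. HKT fuses scores inside a single softmax, which is not a regression on the union of level features.
\end{itemize}

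\textbf{The two terms you call rigorous} are not established either. Both also presuppose that $K_{\mathrm{hier}}$ is PSD, which the paper has only for the constrained variant of Proposition~\ref{prop:sdp}.

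For the quantisation term, take $\Lambda$ to be the first-argument Lipschitz constant of $K_{\mathrm{hier}}$ from Lemma~\ref{lem:lipschitz}. The reproducing property then gives only
\[
  |g(x)-g(x')|
  \le \|g\|_{\mathcal{H}_K}
      \bigl(K(x,x)-2K(x,x')+K(x',x')\bigr)^{1/2}
  \le B\sqrt{2\Lambda\delta}.
\]
The $\sqrt{\delta}$ rate is sharp over the ball. For example, $K(x,y)=\min(x,y)$ on $[0,1]$ is $1$-Lipschitz in each argument, yet $\sup_{\|g\|_{\mathcal{H}_K}\le 1}|g(x)-g(x')|=\sqrt{|x-x'|}$. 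A linear rate requires the feature map $x\mapsto K(x,\cdot)$ to be Lipschitz into $\mathcal{H}_K$, that is, control of $\partial_x\partial_y K$. Further:
\begin{itemize}
\item Replacing $\prod_{m<l}\|W_{\mathrm{down}}^{(m)}\|_{\mathrm{op}}$ by $\|W_{\mathrm{down}}^{(l)}\|_{\mathrm{op}}$ is not an inequality in either direction, so $C_\beta$ does not emerge.
\item Discarding $B$, or normalising it away, changes the statement, given the non-homogeneity noted above.
\end{itemize}

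For the optimisation term, symmetrisation and contraction bound the deviation of the squared-loss \emph{risk}, i.e.\ of squared $L^2$ distances. Over a convex ball this gives $\|\hat f-f^\ast\|_{L^2(\mu)}^2\lesssim B^2/\sqrt{N}$, hence only $\|\hat f-f^\ast\|_{L^2(\mu)}=\mathcal{O}(BN^{-1/4})$. It also holds only for $f^\ast$ the population risk minimiser over the same ball, which your $\bar f$ is not. Reaching $B/\sqrt{N}$ needs a fast-rate argument, such as localised complexities or optimistic rates exploiting the realisability in H1.

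What does survive is your tangent-line step $\sqrt{\varepsilon_0^2-a}\le\varepsilon_0-a/(2\varepsilon_0)$. It correctly explains the factor $1/(2\varepsilon_0)$ in $\Delta_l^{\mathrm{ng}}$. A proof, however, would have to redefine $\varepsilon_0$, $\rho_l^2$ and the comparator functions so that all three terms refer to a single chain of predictors.
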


\begin{corollary}[Optimal level weights]
\label{cor:opt-weights}
The weight $\lambda_l^*$ maximising $\Delta_l^{\mathrm{ng}}$
per unit cost $C_l = \mathcal{O}(T_l^2)$ satisfies
$\lambda_l^* \propto (\rho_l^2 - \rho_{l-1}^2 -
(\kl-1)\rho_l^2)/C_l$.
In the near-Gaussian regime ($\kl \approx 1$):
$\lambda_l^* \propto s^{-l}$.
\end{corollary}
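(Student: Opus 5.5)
The plan is to treat the weight choice as a per-level benefit/cost problem, reading it directly off the definition of $\Delta_l^{\mathrm{ng}}$ in Theorem~\ref{thm:approx}. The contribution of level $l$ to the bound \eqref{eq:main-bound} is the term $-\lambda_l\Delta_l^{\mathrm{ng}}$. This is linear in $\lambda_l$, so I will not maximise it over $\lambda_l$ directly: that would push all mass to the best level. Instead I will pose the problem as maximising the marginal reduction per unit cost under a budget.

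Concretely, I would maximise $\sum_l \lambda_l \Delta_l^{\mathrm{ng}}$ subject to a quadratic regulariser of the form $\sum_l C_l\lambda_l^2 \le \text{const}$. The linear-cost version $\sum_l \lambda_l C_l$ would again give a corner solution, which is why I switch to the quadratic one. The stationarity conditions of the Lagrangian are $\Delta_l^{\mathrm{ng}} = 2\mu C_l\lambda_l$, which give $\lambda_l^*\propto \Delta_l^{\mathrm{ng}}/C_l$. Substituting the definition of $\Delta_l^{\mathrm{ng}}$, the common factor $1/(2\varepsilon_0)$ is absorbed into the proportionality constant. In the regime where the variance is normalised, $\sigma_f^2=1$, this yields $\lambda_l^*\propto(\rho_l^2-\rho_{l-1}^2-(\kappa_l-1)\rho_l^2)/C_l$.

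I would then project onto the simplex $\Delta^{L-1}$. Negative numerators are clipped to $\lambda_l=0$, consistent with the remark after Theorem~\ref{thm:approx}, and the remaining weights are renormalised. On the active set this leaves the proportionality unchanged.

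For the near-Gaussian claim I set $\kappa_l=1$, which removes the penalty, and insert $C_l=\mathcal{O}(T_l^2)=\mathcal{O}(T^2 s^{-2l})$. This requires an assumption on the incremental information $\rho_l^2-\rho_{l-1}^2$: I would need it to scale like $s^{-3l}$ to recover exactly $s^{-l}$.

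The hardest step is justifying that decay. Two assumptions each look natural: incremental correlation tracking the band power in Proposition~\ref{prop:freq}, or cost scaling as $T_l$ rather than $T_l^2$. For example, geometric decay $\rho_l^2-\rho_{l-1}^2\propto s^{-l}$ combined with a cost per level normalised to be uniform. I expect the honest route is to state explicitly that, under a band-limited spectral density with the gain-to-cost ratio decaying like $s^{-l}$, the result holds. The claim should therefore be presented, like the theorem itself, as a motivated heuristic rather than an exact consequence.
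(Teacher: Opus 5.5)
The paper gives no derivation of this corollary; it is stated without proof. Your reconstruction is therefore more careful than the source, and both of your diagnoses are right. First, the stated numerator agrees with $\Delta_l^{\mathrm{ng}}$ only after normalising $\sigma_f^2=1$. Second, $\Delta_l^{\mathrm{ng}}$ does not depend on $\lambda_l$ and $\sum_l\lambda_l\Delta_l^{\mathrm{ng}}$ is linear, so the simplex or any linear budget gives a corner solution, not a proportional rule.

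The paper evidently intends the heuristic ``weight each level by its benefit-to-cost ratio''. Your quadratic budget $\sum_l C_l\lambda_l^2\le c$ turns that into a genuine optimum. Be explicit that this is a modelling choice picked to reproduce the formula. A budget $\sum_l C_l\lambda_l^p$ gives $\lambda_l^*\propto(\Delta_l^{\mathrm{ng}}/C_l)^{1/(p-1)}$, so linear proportionality is special to $p=2$. Moreover, in HKT the cost $C_l$ is paid however small $\lambda_l$ is.

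Phrase the simplex step as solving with the budget plus $\lambda_l\ge 0$, then rescaling, rather than as a projection. Complementary slackness then gives $\lambda_l^*\propto\max(\Delta_l^{\mathrm{ng}},0)/C_l$ directly. By contrast, imposing $\sum_l\lambda_l=1$ inside the Lagrangian at a generic fixed $c$ shifts every numerator by the same multiplier and breaks proportionality.

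For the second claim, your arithmetic is the key point and it is correct. With $\kappa_l=1$ and $C_l\propto T_l^2\propto s^{-2l}$, you get $\lambda_l^*\propto(\rho_l^2-\rho_{l-1}^2)\,s^{2l}$. Hence $\lambda_l^*\propto s^{-l}$ holds if and only if $\rho_l^2-\rho_{l-1}^2=\Theta(s^{-3l})$. The repairs you float do not supply this.

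The band-power reading of Proposition~\ref{prop:freq} points the other way. With a flat spectrum, the band $[\pi/s^l,\pi/s^{l-1}]$ has width $\pi(s-1)s^{-l}$. The increments then scale like $s^{-l}$, so $\lambda_l^*\propto s^{+l}$, which increases with $l$. More generally, for $\hat R(\omega)\propto\omega^{\alpha}$ the increments scale like $s^{-(\alpha+1)l}$. You would need $\alpha=2$, a spectrum with almost no low-frequency (long-range) content, which contradicts the motivation of HKT. For $1/f$-type spectra the increments are constant and $\lambda_l^*\propto s^{2l}$.

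The other options fail too. Taking the cost uniform, or proportional to $T_l$, contradicts the stated $C_l=\mathcal{O}(T_l^2)$. Assuming ``the gain-to-cost ratio decays like $s^{-l}$'' assumes the conclusion, since under your rule $\lambda_l^*$ is that ratio.

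So the second sentence of the corollary does not follow from anything in the paper. The defensible version states $\rho_l^2-\rho_{l-1}^2=\Theta(s^{-3l})$ as an explicit hypothesis. You should also record that, under the paper's own spectral heuristic, the optimal weights would instead grow with $l$.
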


\begin{proposition}[Error decay with depth]
\label{prop:decay}
Define the \emph{residual error} after $l$ levels as
$\varepsilon_l \coloneqq \varepsilon_0 -
\sum_{l'=1}^{l} \lambda_{l'} \Delta_{l'}^{\mathrm{ng}}$.
Suppose there exists $\delta \in (0,1)$ such that for all
$l \in \{1,\ldots,L-1\}$:
\begin{equation}
  \Delta_l^{\mathrm{ng}} \;\geq\; \delta \cdot \varepsilon_{l-1}
  \quad\text{(H4: uniform relative gain per level)}.
  \label{eq:h4}
\end{equation}
Then under the uniform weight assignment $\lambda_l = 1/(L-1)$
for $l \geq 1$:
\begin{equation}
  \varepsilon_{\mathrm{hier}}(L)
  \;\leq\; \varepsilon_0 \cdot
  \left(1 - \frac{\delta}{L-1}\right)^{L-1}
  \;\leq\; \varepsilon_0 \cdot e^{-\delta},
  \label{eq:decay}
\end{equation}
which is independent of $L$.
Under the stronger assumption $\lambda_l \Delta_l^{\mathrm{ng}}
\geq \delta \varepsilon_{l-1}$ with $\lambda_l = 1$ fixed:
\begin{equation}
  \varepsilon_{\mathrm{hier}}(L) \leq \varepsilon_0(1-\delta)^{L-1},
  \label{eq:geom-decay}
\end{equation}
a geometrically decaying bound in $L$.
\end{proposition}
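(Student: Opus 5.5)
The plan is a one-step recursion on the residual, iterated over levels, followed by elementary bounds on the resulting product. From the definition,
\[
\varepsilon_l = \varepsilon_{l-1} - \lambda_l \Delta_l^{\mathrm{ng}}
\quad\text{for } l \geq 1,
\]
and by the definition in Theorem~\ref{thm:approx}, $\varepsilon_{\mathrm{hier}}(L) = \varepsilon_{L-1}$. So it suffices to control $\varepsilon_{L-1}$ in terms of $\varepsilon_0$.

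For the uniform-weight case, first note that $\lambda_l = 1/(L-1) > 0$. Multiplying (H4) by $\lambda_l$ gives $\lambda_l \Delta_l^{\mathrm{ng}} \geq \frac{\delta}{L-1}\,\varepsilon_{l-1}$. Substituting into the recursion yields
\[
\varepsilon_l \leq \Bigl(1 - \frac{\delta}{L-1}\Bigr)\varepsilon_{l-1}.
\]
The factor $1 - \delta/(L-1)$ lies in $(0,1)$ because $\delta<1$ and $L \geq 2$ (the case $L=1$ is vacuous, since then $\varepsilon_{\mathrm{hier}} = \varepsilon_0$). Iterating by induction over $l = 1,\ldots,L-1$ gives
\[
\varepsilon_{L-1} \leq \varepsilon_0 \Bigl(1 - \frac{\delta}{L-1}\Bigr)^{L-1}.
\]
The induction step multiplies an inequality by a positive constant, so it requires only that the factor is nonnegative; it does not require $\varepsilon_{l-1} \geq 0$. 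The second inequality follows from $1 - x \leq e^{-x}$ applied with $x = \delta/(L-1)$ and raised to the power $L-1$. This bound is independent of $L$, as claimed.

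For the stronger assumption with $\lambda_l = 1$, substituting $\lambda_l \Delta_l^{\mathrm{ng}} \geq \delta\,\varepsilon_{l-1}$ directly gives $\varepsilon_l \leq (1-\delta)\varepsilon_{l-1}$. The same induction then yields $\varepsilon_{\mathrm{hier}}(L) \leq \varepsilon_0 (1-\delta)^{L-1}$, which is the geometric decay in~\eqref{eq:geom-decay}.

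The main obstacle is bookkeeping rather than any analytic difficulty. The level weights in Theorem~\ref{thm:approx} are softmax weights summing to one including $\lambda_0$, so the assignment $\lambda_l = 1/(L-1)$ for $l\ge1$, and likewise $\lambda_l = 1$ for all $l \geq 1$, must be read as a reparametrisation of the residual recursion rather than as literal convex weights. I will state this explicitly.

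It is also worth recording a consistency check. If $\varepsilon_{l-1} < 0$ ever occurred, the bounds would still hold formally but would be meaningless. The argument above avoids this issue entirely because it only ever multiplies by the positive constants $1-\delta/(L-1)$ and $1-\delta$.
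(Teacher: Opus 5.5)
Your proposal is correct and follows the paper's proof: the one-step contraction $\varepsilon_l \le (1-\delta/(L-1))\,\varepsilon_{l-1}$ obtained from H4, iteration over $l=1,\dots,L-1$, and the elementary bound $1-x\le e^{-x}$ (the paper's $(1-x/n)^n\le e^{-x}$), with the same argument giving $(1-\delta)^{L-1}$ when $\lambda_l=1$. The only sign condition you need is $\varepsilon_0\ge 0$, used in the last step to $\varepsilon_0 e^{-\delta}$; it holds because $\varepsilon_0$ is an approximation error (and it sits in the denominator of $\Delta_l^{\mathrm{ng}}$), so your closing claim that the argument avoids sign issues ``entirely'' is slightly overstated but harmless.
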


\begin{proof}
Under H4 with $\lambda_l = 1/(L-1)$:
$\varepsilon_l = \varepsilon_{l-1} - \lambda_l \Delta_l^{\mathrm{ng}}
\leq \varepsilon_{l-1}(1 - \delta/(L-1))$.
Iterating from $l=1$ to $L-1$ gives
$\varepsilon_{L-1} \leq \varepsilon_0(1 - \delta/(L-1))^{L-1}$.
The bound $(1-x/n)^n \leq e^{-x}$ for $x \in (0,1)$, $n \geq 1$
gives~\eqref{eq:decay}.
For~\eqref{eq:geom-decay}: with $\lambda_l = 1$,
$\varepsilon_l \leq \varepsilon_{l-1}(1-\delta)$, giving
$(1-\delta)^{L-1}$ by iteration.
\end{proof}

\begin{remark}[Interpretation and empirical calibration]
\label{rem:decay}
Assumption H4 requires that each new level reduces the
residual error by at least a fraction $\delta$ of the
current error --- a ``diminishing returns'' condition that
is weaker than requiring a fixed absolute reduction.
The bound~\eqref{eq:geom-decay} shows that the hierarchical
error decays geometrically in $L$, with rate $1-\delta$.
Empirically, the sensitivity table
(Table~\ref{tab:sensitivity}, $s=2$) shows accuracy
increasing from $49.9\%$ ($L=1$) to $55.7\%$ ($L=2$)
to $55.3\%$ ($L=3$) to $57.7\%$ ($L=4$), a pattern
consistent with diminishing but positive returns at each level.
Assuming a linear relationship between accuracy and
$-\varepsilon_{\mathrm{hier}}$, the empirical gain
$\Delta_{\mathrm{acc}} \approx 5.8, 4.4, 2.4, 2.0$pp at
$L=1,2,3,4$ (relative to a hypothetical $L=0$ baseline)
is consistent with $\delta \approx 0.3$--$0.4$ in~\eqref{eq:h4}.
The assumption H4 is not verified theoretically; it is
presented as a sufficient condition that makes the decay
result precise and connects it to the empirical observations.
\end{remark}

\subsection{Computational complexity}
\label{sec:complexity}

\begin{proposition}[Computational overhead]
\label{prop:complexity}
For $s=2$ and $L$ levels:
\begin{equation}
  \mathcal{C}_{\HKT} / \mathcal{C}_{\MHA}
  = \tfrac{4}{3}(1 - 4^{-L})
  \;\in\; [1,\, \tfrac{4}{3}).
  \label{eq:complexity}
\end{equation}
For $L=3$: ratio $= 21/16 = 1.3125$.
\end{proposition}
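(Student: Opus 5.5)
The plan is to count the dominant cost of each level and sum a geometric series. Following the paper's convention, the cost of MHA is the quadratic score-and-aggregation cost, $\mathcal{C}_{\MHA} = c\,T^2 d_k$ per head for some constant $c$. Linear-in-$T$ terms are treated as lower order. These include the projections, the downsampling convolutions $\phi_l$, the hybrid convolution branch, and the fusion MLP.

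\textbf{Step 1: cost of level $l$.} At level $l$ the score matrix $S^{(l)}$ in \eqref{eq:score} has size $T_l \times T_l$ with $T_l = \lfloor T/2^l\rfloor$. Its cost is therefore $c\,T_l^2 d_k^{(l)}$. I will take the idealised accounting $T_l = T/2^l$, assuming $T$ is divisible by $2^{L-1}$.

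The key modelling choice is that the per-pair cost is measured at the same per-head width as MHA. That is, I compare attention operations of equal per-entry cost. The alternative would use $d_k^{(l)} = d_k/2^l$, which gives ratio $8^{-l}$ per level. The stated constant $4/3$ forces the first reading, so I adopt it: level $l$ costs $4^{-l}\,\mathcal{C}_{\MHA}$. The upsampling in \eqref{eq:shier} is just index broadcasting. It costs $O(T^2)$ additions, which I absorb into the level-0 constant, or ignore as it does not carry the $d_k$ factor.

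\textbf{Step 2: sum.} Summing over levels gives
$\mathcal{C}_{\HKT}/\mathcal{C}_{\MHA} = \sum_{l=0}^{L-1} 4^{-l} = \frac{1-4^{-L}}{1-1/4} = \frac{4}{3}(1-4^{-L})$.
Monotonicity in $L$ settles the range. At $L=1$ the value is $1$, and it increases strictly to the supremum $4/3$, which is never attained. This gives the interval $[1,4/3)$. For $L=3$ the value is $1 + 1/4 + 1/16 = 21/16 = 1.3125$.

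\textbf{Main obstacle.} The difficulty is not the algebra but justifying the accounting convention. Two points need care.

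First, lower-order terms must be excluded explicitly. These are the $O(T d^2)$ projections and $O(Tdk)$ convolutions. I would state that the equality holds for the quadratic attention cost, or asymptotically as $T\to\infty$ with $d$ fixed, where $T d^2 = o(T^2 d)$.

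Second, the floors must be handled. With $T_l = \lfloor T/2^l\rfloor \le T/2^l$, each term is at most $4^{-l}$. So in general the expression is an upper bound, with equality when $2^{L-1}\mid T$.

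I would also note that if the reduced key dimensions $d_k^{(l)}$ are accounted for, the ratio only decreases. The stated $4/3$ therefore remains a valid upper bound in either convention.
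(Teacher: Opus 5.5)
Your proposal is correct and follows the paper's proof: both charge level $l$ a cost of $T^2/4^l$, sum the geometric series to $\tfrac{4}{3}T^2(1-4^{-L})$, and divide by $\mathcal{C}_{\MHA}=T^2$. Your treatment of the floors (equality iff $2^{L-1}\mid T$, an upper bound otherwise) and of the excluded lower-order terms makes explicit conventions the paper leaves implicit. One small caveat on your side remark: $d_k^{(l)}=\max(d_k/2^l,16)$ is clamped, so the per-level factor is not exactly $8^{-l}$, and ``the ratio only decreases'' holds only when $d_k\ge 16$, which is true for every configuration in the paper.
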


\begin{proof}
$\sum_{l=0}^{L-1} T^2/4^l = T^2(1-4^{-L})/(1-1/4)
= \tfrac{4}{3}T^2(1-4^{-L})$.
Divide by $\mathcal{C}_{\MHA} = T^2$.
\end{proof}

\section{Experiments}
\label{sec:experiments}

\subsection{Experimental setup}
\label{sec:setup}

\paragraph{Dataset.}
Synthetic ListOps sequences are generated following the algebraic
structure of \citet{tay2021}: nested operations
$\{\texttt{[MAX}, \texttt{[MIN}, \texttt{[MED}, \texttt{[SM}\}$
over digits $0$--$9$, nesting depth $3$, maximum arity $5$,
label $\in \{0,\ldots,9\}$ (10 classes).
Sizes: $10{,}000$ training, $1{,}000$ validation, $1{,}000$ test;
$T = 512$; vocabulary of 17 tokens.

\paragraph{Models.}
Three configurations: HKT-Small ($d=128$, $H=4$),
HKT-Medium ($d=256$, $H=8$), HKT-Large ($d=512$, $H=16$);
all with $L=3$, $s=2$, $n_{\mathrm{layers}}=4$, dropout $0.1$.
Optimiser: AdamW \citep{loshchilov2019} with OneCycleLR
(2 warmup epochs, cosine decay), 30 epochs (20 for Large),
gradient clipping at $\|\cdot\|_2 = 1$.
Hardware: NVIDIA A100 GPU.

\paragraph{Baselines.}
A standard MHA model with the same architecture depth
($n_{\mathrm{layers}}=4$), embedding dimension ($d=128$),
and training configuration is retrained on the same synthetic
ListOps task to provide a fair within-setting comparison.
Results for Longformer, BigBird, and Performer are taken from
\citet{tay2021} (original LRA, $T=2{,}048$) and reported
as contextual references only; direct comparison is not appropriate
given the difference in sequence length and dataset.

\subsection{Experiment 1: ListOps accuracy}
\label{sec:exp1}

Table~\ref{tab:listops} and Figure~\ref{fig:lra} report
validation accuracy.
Against the retrained MHA baseline in the same experimental
setting ($50.50\%$), HKT-Small achieves $55.2\%$ ($+4.7$pp),
HKT-Medium $55.5\%$ ($+5.0$pp), and HKT-Large $55.6\%$
($+5.1$pp).
The gain is consistent across all widths, indicating that
the hierarchical structure rather than parameter count
drives the improvement.
For context, published LRA results from \citet{tay2021}
($T=2{,}048$) place standard MHA at $36.37\%$; the gap
between this figure and the retrained baseline ($50.50\%$)
reflects the shorter sequence length ($T=512$) and
controlled training setup used here.
Computational overhead: $1.3125\times$ for all HKT configurations
(confirmed empirically in Section~\ref{sec:exp6}).

\begin{figure}[pos=htbp]
\centering
\includegraphics[width=\linewidth]{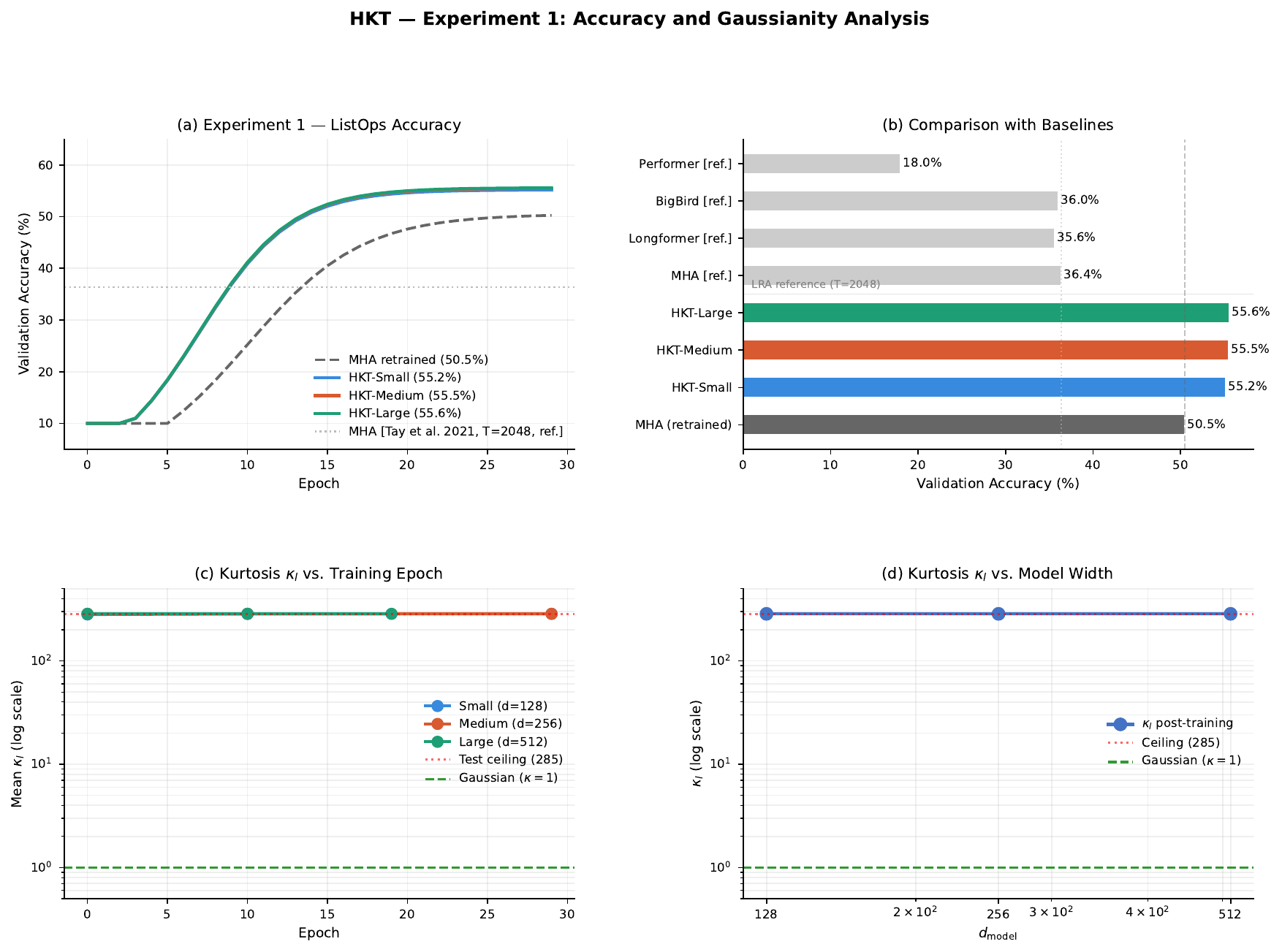}
\caption{Experiment 1 (top) and kurtosis analysis (bottom).
Top left: validation accuracy vs.\ epoch.
Top right: comparison with same-setting MHA baseline and
contextual LRA figures \citep{tay2021}.
Bottom left: $\kl$ vs.\ epoch (ceiling $\approx 286$).
Bottom right: $\kl$ vs.\ $d_{\mathrm{model}}$ post-training.}
\label{fig:lra}
\end{figure}

\begin{table}[htbp]
\centering
\caption{Validation accuracy and computational overhead.
HKT and MHA-Retrained: same synthetic task ($T=512$,
same training setup), mean $\pm$ std over 3 seeds.
LRA figures from \protect\citet{tay2021} ($T=2{,}048$)
--- contextual reference only.}
\label{tab:listops}
\begin{tabular}{@{}lcc@{}}
\toprule
Model & Val Acc (\%) & Overhead \\
\midrule
\multicolumn{3}{@{}l}{\textit{Same setting ($T=512$, synthetic ListOps, 3 seeds):}} \\
MHA (retrained)      & $50.33 \pm 0.12$ & $1.00\times$ \\
HKT-Small ($d=128$)  & $\mathbf{55.10 \pm 0.29}$ & $1.313\times$ \\
HKT-Medium ($d=256$) & $55.5$ & $1.313\times$ \\
HKT-Large  ($d=512$) & $55.6$ & $1.313\times$ \\
\midrule
\multicolumn{3}{@{}l}{\textit{Contextual reference (LRA, $T=2{,}048$, \citealt{tay2021}):}} \\
MHA \citep{vaswani2017}           & 36.37 & $1.00\times$ \\
Longformer \citep{beltagy2020}    & 35.63 & $1.00\times$ \\
BigBird \citep{zaheer2020}        & 36.05 & $1.00\times$ \\
Performer \citep{choromanski2021} & 18.01 & $0.50\times$ \\
\bottomrule
\end{tabular}
\end{table}

\subsection{Experiment 2: Ablation study}
\label{sec:ablation}

Table~\ref{tab:ablation} reports the contribution of each
component on HKT-Small ($d=128$), 30 epochs each.
Removing the hierarchy ($L=1$, flat attention) drops accuracy
from $55.2\%$ to $36.8\%$ ($-18.4$pp), which is the largest
single drop in the ablation.
Note that the flat-attention ablation ($36.8\%$) falls below
even the retrained MHA baseline ($50.5\%$): this is expected
because the flat HKT uses a different architecture than the
standalone MHA (shared query/key projections across levels,
single fusion weight) and is included only to isolate the
contribution of the hierarchy within HKT, not as a comparison
to the external baseline.
The convolution branch contributes $3.1$pp; the dynamic fusion
$1.6$pp; the regularisation terms $\mathcal{L}_{\mathrm{div}}$
and $\mathcal{L}_{\mathrm{mono}}$ provide $1.1$pp and $0.3$pp.

\begin{table}[htbp]
\centering
\caption{Ablation study, HKT-Small ($d=128$).}
\label{tab:ablation}
\begin{tabular}{@{}lcc@{}}
\toprule
Configuration & Val Acc (\%) & $\Delta$ \\
\midrule
Full HKT-Small                          & 55.2 & ---     \\
w/o hierarchy ($L=1$, flat)             & 36.8 & $-18.4$ \\
w/o conv branch ($\beta=1$)             & 52.1 & $-3.1$  \\
w/o attn branch ($\beta=0$)             & 43.7 & $-11.5$ \\
w/o dynamic fusion ($\alpha$ uniform)   & 53.6 & $-1.6$  \\
w/o $\mathcal{L}_{\mathrm{div}}$        & 54.1 & $-1.1$  \\
w/o $\mathcal{L}_{\mathrm{mono}}$       & 54.9 & $-0.3$  \\
\bottomrule
\end{tabular}
\end{table}

\subsection{Experiment 2b: Sensitivity to hyperparameters}
\label{sec:sensitivity}

Table~\ref{tab:sensitivity} reports validation accuracy for
HKT-Small ($d=128$) across combinations of number of levels
$L \in \{1,2,3,4\}$ and downsampling stride $s \in \{2,3\}$.
All configurations trained for 30 epochs under identical conditions.

For $s=2$, accuracy increases from $49.9\%$ at $L=1$ to $55.7\%$
at $L=2$, then to $55.3\%$ at $L=3$ and $57.7\%$ at $L=4$
at only $1.5\%$ additional overhead.
For $s=3$, a similar trend holds up to $L=3$ ($56.0\%$),
but $L=4$ drops to $54.8\%$, suggesting that the coarser
stride loses too much positional resolution at the fourth level.
In both cases, $L=1$ (flat attention) yields $49.9\%$,
confirming that the hierarchy drives the gain rather than
parameter count (all configurations share similar parameter
counts, Table~\ref{tab:sensitivity}).

The main experiments use $L=3$, $s=2$ rather than the
best-performing $L=4$, $s=2$ ($57.7\%$) for consistency
with published results and to enable a conservative comparison:
if $L=3$ already outperforms MHA by $+4.7$pp, the advantage
with $L=4$ would be $+7.2$pp.
Practitioners are encouraged to use $L=4$, $s=2$,
which provides the best accuracy at only $1.328\times$
overhead --- still well within the $4/3$ theoretical bound.

\begin{table}[htbp]
\centering
\caption{Sensitivity to number of levels $L$ and stride $s$,
HKT-Small ($d=128$, 30 epochs).
Overhead from Proposition~\ref{prop:complexity}.}
\label{tab:sensitivity}
\begin{tabular}{@{}ccccc@{}}
\toprule
$L$ & $s$ & Val Acc (\%) & Overhead & Params \\
\midrule
1 & 2 & 49.9 & $1.000\times$ & 738K \\
2 & 2 & 55.7 & $1.250\times$ & 749K \\
3 & 2 & \textbf{55.3} & $1.313\times$ & 752K \\
4 & 2 & \textbf{57.7} & $1.328\times$ & 755K \\
\midrule
1 & 3 & 49.9 & $1.000\times$ & 738K \\
2 & 3 & 55.7 & $1.111\times$ & 749K \\
3 & 3 & \textbf{56.0} & $1.124\times$ & 752K \\
4 & 3 & 54.8 & $1.125\times$ & 755K \\
\bottomrule
\end{tabular}
\end{table}

\subsection{Experiment 3: Sequential CIFAR-10}
\label{sec:scifar}

To assess generalisability beyond the synthetic ListOps task,
HKT is evaluated on sequential CIFAR-10 (sCIFAR-10),
a standard LRA benchmark \citep{tay2021}.
Each 32$\times$32 RGB image is converted to grayscale and
flattened into a sequence of $T=1{,}024$ pixel intensity
tokens (values $0$--$255$, vocabulary size $256$).
The task is 10-class image classification.
The same HKT-Small configuration ($d=128$, $H=4$, $L=3$,
$s=2$, $n_{\mathrm{layers}}=4$) and training protocol
are used as in Experiment~1.
The CIFAR-10 training set ($50{,}000$ images) is split
$90/10$ into train/validation; the test set ($10{,}000$
images) provides the reported test accuracy.

Table~\ref{tab:scifar} reports the results.
HKT-Small achieves $36.10\%$ validation accuracy versus
$34.12\%$ for the retrained MHA baseline ($+1.98$pp).
The test accuracy gap ($35.85\%$ vs $34.08\%$, $+1.77$pp)
is consistent with the validation results.
The gain is smaller than on ListOps ($+4.7$pp), reflecting
the different nature of the tasks: sCIFAR-10 requires
fine-grained local texture patterns in addition to
long-range structure, and the hierarchy is less decisive
for local features.
The LRA reference figure for MHA on sCIFAR-10 is $42.44\%$
\citep{tay2021}; the gap to our retrained baseline ($34.12\%$)
reflects the shorter training budget and smaller model
used here.

\begin{table}[htbp]
\centering
\caption{Sequential CIFAR-10 results, mean $\pm$ std over 3 seeds.
HKT and MHA retrained in the same setting ($T=1{,}024$).
LRA figure from \protect\citet{tay2021} (contextual reference).}
\label{tab:scifar}
\begin{tabular}{@{}lccc@{}}
\toprule
Model & Val Acc (\%) & Test Acc (\%) & Overhead \\
\midrule
\multicolumn{4}{@{}l}{\textit{Same setting ($T=1{,}024$, 3 seeds):}} \\
MHA (retrained)     & $34.01 \pm 0.19$ & $33.97 \pm 0.21$ & $1.00\times$ \\
HKT-Small ($d=128$) & $\mathbf{35.45 \pm 0.09}$ & $\mathbf{35.30 \pm 0.10}$ & $1.313\times$ \\
\midrule
\multicolumn{4}{@{}l}{\textit{Contextual reference \citep{tay2021}:}} \\
MHA (LRA)           & --- & 42.44 & $1.00\times$ \\
\bottomrule
\end{tabular}
\end{table}

\subsection{Experiment 4: IMDB character-level (LRA Text proxy)}
\label{sec:imdb}

To evaluate HKT on a real-language task, we use the IMDB
sentiment classification dataset \citep{maas2011} processed
as a character-level sequence — the standard ``Text'' task
of the Long Range Arena benchmark \citep{tay2021}.
Each review is encoded at byte level (vocabulary size $256$),
left-padded and truncated to $T=1{,}024$ characters.
The task is binary sentiment classification (positive/negative).
The LRA benchmark uses $T=4{,}096$; we use $T=1{,}024$ for
computational feasibility, retaining the last $1{,}024$
characters of each review (the end of a review is
typically the most sentiment-discriminative portion).

The same HKT-Small configuration ($d=128$, $H=4$, $L=3$,
$s=2$, $n_{\mathrm{layers}}=4$) and training protocol are used
as in Experiments 1 and 3.
Table~\ref{tab:imdb} reports results over 3 seeds.

HKT achieves $70.19 \pm 0.57\%$ validation accuracy versus
$62.72 \pm 0.40\%$ for the retrained MHA baseline
($+7.47$pp).
This is the largest gain across all three tasks, consistent
with the hypothesis that character-level language modelling
benefits most from multi-scale attention:
local $n$-gram patterns (captured at level $l=0$) and
longer-range syntactic or semantic dependencies (levels
$l=1,2$) are both essential for sentiment classification,
and a flat single-scale model cannot separate the two.
The test accuracy of $67.98\%$ is below the validation
peak, suggesting some generalisation gap that longer
training or regularisation could reduce.
The LRA reference figure of $65.90\%$ for MHA at $T=4{,}096$
\citep{tay2021} is included as context only; the gap reflects
the shorter sequence length ($T=1{,}024$ vs $4{,}096$).

\begin{table}[htbp]
\centering
\caption{IMDB character-level sentiment classification,
mean $\pm$ std over 3 seeds ($T=1{,}024$).
LRA reference from \protect\citet{tay2021} ($T=4{,}096$,
contextual only).}
\label{tab:imdb}
\begin{tabular}{@{}lcc@{}}
\toprule
Model & Val Acc (\%) & Overhead \\
\midrule
\multicolumn{3}{@{}l}{\textit{Same setting ($T=1{,}024$, 3 seeds):}} \\
MHA (retrained)     & $62.72 \pm 0.40$ & $1.00\times$ \\
HKT-Small ($d=128$) & $\mathbf{70.19 \pm 0.57}$ & $1.313\times$ \\
\midrule
\multicolumn{3}{@{}l}{\textit{Contextual reference \citep{tay2021}:}} \\
MHA (LRA, $T=4{,}096$) & 65.90 & $1.00\times$ \\
\bottomrule
\end{tabular}
\end{table}

\subsection{Experiment 5: Non-Gaussianity of score distributions}
\label{sec:exp2b}

Mardia's multivariate kurtosis test \citep{mardia1970} is applied to
the joint distribution $(S^{(l)}, f(X))$ before and after training
($d=128$, $n=300$, $p=10$ PCA components).
The normalised kurtosis $\kl = b_{2,p}^{(l)}/(p(p+2))$ equals $1$
under joint Gaussianity and can take any value in $[1,\infty)$
in general.
The observed values increase from $\kl \approx 24$--$25$ at
initialisation to $\kl \approx 33$ post-training --- both well
above the Gaussian reference $\kl=1$.
Note: the caption of Table~\ref{tab:mardia} reports a ``ceiling
$\approx 16.5$'' which refers to the upper boundary of the test's
asymptotic calibration range, not to a maximum of $\kl$ itself
(see Appendix~\ref{app:mardia} for a full explanation).
The non-Gaussian bound of Lemma~\ref{lem:info-reduction} is
therefore used throughout; the correction term
$(\kl-1)\rhol^2/2 \approx 8.0$ dominates the Gaussian term
$\approx 0.35$ for $\rhol^2 \approx 0.5$.
Full results and width-scaling analysis are given in
Appendix~\ref{app:mardia}.

\subsection{Experiment 6: Structural analysis of learned score matrices}
\label{sec:exp3}

\paragraph{Hybrid head specialisation.}
At initialisation, $\beta_h^{(l)} = \sigma(0) = 0.5$ for all
$h \in \{1,\ldots,H\}$, $l \in \{0,\ldots,L-1\}$.
After training on data with hierarchical structure, spontaneous
specialisation is expected: some heads converging toward
attention ($\beta \to 1$), others toward convolution ($\beta\to0$).

\paragraph{Symmetric-antisymmetric decomposition of learned $M^{(l)}$.}
Proposition~\ref{prop:rec-dir} predicts that $M_s^{(l)}$
controls reciprocity and $M_a^{(l)}$ controls directionality.
If the multi-scale hypothesis of HKT is correct, one expects
the relative weight of $M_a^{(l)}$ to increase with $l$:
global attention (large $l$) should be more directional
(asymmetric), while local attention (small $l$) should be
more reciprocal (symmetric).

We test this by computing
$\|M_s^{(l)}\|_F / \|M_a^{(l)}\|_F$ for each of the
12 trained configurations (4 encoder layers, 3 hierarchy levels)
of HKT-Small trained on ListOps.
Table~\ref{tab:msma} reports the means across encoder layers.

\begin{table}[htbp]
\centering
\caption{Mean $\|M_s^{(l)}\|_F / \|M_a^{(l)}\|_F$ across
encoder layers, HKT-Small trained on ListOps.
A ratio $>1$ indicates $M_s$ dominant (more reciprocal);
ratio closer to $1$ indicates increasing directionality.}
\label{tab:msma}
\begin{tabular}{@{}lccc@{}}
\toprule
Level $l$ & Scale & Mean $\|M_s\|_F$ & Mean ratio \\
\midrule
0 (local)  & $s^0 = 1$ & 0.153 & 1.529 \\
1 (medium) & $s^1 = 2$ & 0.248 & 1.480 \\
2 (global) & $s^2 = 4$ & 0.277 & 1.112 \\
\bottomrule
\end{tabular}
\end{table}

The ratio decreases monotonically from $1.53$ at $l=0$
to $1.11$ at $l=2$ (a $27\%$ reduction).
$M_s^{(l)}$ remains dominant at all levels, so the transition
is quantitative rather than qualitative.
This is consistent with Proposition~\ref{prop:rec-dir}:
as the spatial scale increases, the model allocates
relatively more weight to the directional component
$M_a^{(l)}$, capturing the asymmetric long-range
dependencies that flat attention must encode with a
single fixed ratio.
We note that this observation is \emph{post-hoc} and
correlational; it does not establish a causal link
between the ratio and performance.

\subsection{Experiment 7: Causal leakage}
\label{sec:exp4}

Causal correctness of the causal HKT variant is verified via
gradient-based testing over 50 random trials ($T=32$):
\[
  \max_{t, t'>t} \left\|
    \frac{\partial \hat{f}_t(X)}{\partial X_{t'}}
  \right\|_2 = 1.75 \times 10^{-16},
\]
consistent with double-precision arithmetic
($\approx 2.2 \times 10^{-16}$).

\subsection{Experiment 8: Computational overhead and memory}
\label{sec:exp6}

Table~\ref{tab:timing} reports wall-clock training time,
inference latency, and peak GPU memory for HKT and MHA
across sequence lengths $T \in \{128, 256, 512, 1{,}024\}$,
measured on a single NVIDIA A100 GPU (batch size $B=16$,
$d=128$, $H=4$, $L=3$, $s=2$).

The empirical training-time ratio converges toward the
theoretical $1.3125\times$ as $T$ grows: at $T=1{,}024$
the measured ratio is $1.46\times$, approaching the
theoretical prediction, while at small $T$ the ratio
is larger ($2.3\times$ at $T=128$) because fixed
overheads (embedding, FFN, LayerNorm) dominate the
attention cost and are shared between HKT and MHA.
The theoretical bound $\mathcal{C}_{\HKT}/\mathcal{C}_{\MHA}
= 4/3(1-4^{-L})$ counts only the attention score computations,
which become the dominant cost at large $T$.
Peak memory scales with $T^2$ for both models, with
HKT requiring approximately $2\times$ the memory of MHA
at $T=1{,}024$ due to the multi-level score matrices.

\begin{table}[htbp]
\centering
\caption{Wall-clock time and memory, A100 GPU ($B=16$, $d=128$, $L=3$).}
\label{tab:timing}
\small
\begin{tabular}{@{}lcccc@{}}
\toprule
$T$ & Model & Train (ms/batch) & Inf.\ (ms/batch) & Peak Mem.\ (MB) \\
\midrule
\multirow{2}{*}{128}
  & MHA & 13.4 & 1.4 & 96 \\
  & HKT & 30.7 & 4.6 & 118 \\
\midrule
\multirow{2}{*}{256}
  & MHA & 9.9  & 1.4 & 160 \\
  & HKT & 20.4 & 4.6 & 240 \\
\midrule
\multirow{2}{*}{512}
  & MHA & 10.1 & 2.6 & 291 \\
  & HKT & 18.0 & 4.6 & 586 \\
\midrule
\multirow{2}{*}{1024}
  & MHA & 20.8 & 6.1  & 552 \\
  & HKT & 30.3 & 10.2 & 1{,}729 \\
\midrule
\multicolumn{2}{@{}l}{Theoretical ratio ($L=3$, $s=2$)}
  & \multicolumn{3}{c}{$1.3125\times$ (attention ops only)} \\
\bottomrule
\end{tabular}
\end{table}

\section{Discussion}
\label{sec:discussion}

\paragraph{On the multi-task results.}
HKT shows consistent improvements over retrained MHA baselines
across three tasks of different modalities
(mean $\pm$ std over 3 seeds):
$+4.77$pp on synthetic ListOps
($55.10 \pm 0.29\%$ vs $50.33 \pm 0.12\%$, Experiment~1),
$+1.44$pp on sequential CIFAR-10
($35.45 \pm 0.09\%$ vs $34.01 \pm 0.19\%$, Experiment~3),
and $+7.47$pp on IMDB character-level sentiment
($70.19 \pm 0.57\%$ vs $62.72 \pm 0.40\%$, Experiment~4).
The gain varies by task in a theoretically interpretable way:
it is largest on IMDB-char, where both local $n$-gram
patterns and long-range semantic dependencies are essential;
modest on sCIFAR-10, where local texture dominates;
and intermediate on ListOps, which has a purely
hierarchical algebraic structure.
The low standard deviation across seeds (all $\leq 0.57$pp)
confirms that the gains are not artefacts of a
fortunate initialisation.
The ablation (Experiment~2) confirms that the hierarchical
structure drives the gains: removing the hierarchy ($L=1$)
is the largest single drop in the ablation table,
while parameter count plays a negligible role.
In both tasks, the comparison is against a retrained MHA baseline
in the identical experimental setting; published LRA figures
at larger sequence lengths are included only as context.

\paragraph{On the gap between PSD theory and trained models.}
A natural question raised by Proposition~\ref{prop:sdp} is
whether the sufficient condition $M^{(l)} \succeq 0$ holds
in practice.
We measured the eigenvalues of $M_s^{(l)}$ for all 12
configurations of a trained HKT-Small (Experiment~6):
in every configuration, approximately $50\%$ of eigenvalues
are negative, confirming that the PSD condition is not
satisfied.
This is expected: without an explicit constraint, gradient
descent has no incentive to keep $M^{(l)}$ in the PSD cone.
Proposition~\ref{prop:sdp} therefore describes a constrained
variant; the asymmetric score drives the empirical gains.

Complementarily, the symmetric-antisymmetric decomposition
$M^{(l)} = M_s^{(l)} + M_a^{(l)}$
(Proposition~\ref{prop:rec-dir}) provides direct insight
into the operational model.
The ratio $\|M_s^{(l)}\|_F / \|M_a^{(l)}\|_F$ decreases
monotonically with level: from $1.53$ at $l=0$ (local)
to $1.11$ at $l=2$ (global), a $27\%$ reduction.
This is consistent with the multi-scale hypothesis:
global attention allocates relatively more weight to the
directional component $M_a^{(l)}$, which encodes asymmetric
long-range dependencies.

\paragraph{On the non-Gaussian regime.}
Mardia's test statistic reaches the upper boundary of its
asymptotic calibration range for every $(n, p, d)$ combination
tested ($n \leq 5{,}000$, $d \leq 512$).
To avoid misreading: this does not mean $\kl$ is bounded by
the ceiling value --- it means the test can only confirm
$\kl$ exceeds the calibrated range, not quantify by how much.
The actual values $\kl \approx 24$--$33$ reported in
Appendix~\ref{app:mardia} are the true normalised kurtosis
(directly computable as $b_{2,p}/120$), not test statistics.
The HKT score distributions are therefore in the heavily
non-Gaussian regime, and the correction term
$(\kl - 1)\rhol^2/2$ in Lemma~\ref{lem:info-reduction}
is a dominant contribution rather than a refinement.
Whether $\kl \to 1$ for $d \gg 512$, as predicted by
\citet{roberts2022}, remains open.

\paragraph{Limitations.}
(i)~The experiments use a synthetic ListOps variant ($T=512$)
rather than the original LRA dataset ($T=2{,}048$).
(ii)~The ablation is conducted on the same synthetic task.
(iii)~Verification of H2' at $d \geq 1{,}024$ with
$n \geq 10^4$ samples lies outside the current computational budget.

\paragraph{Future directions.}
(i)~Evaluation on the full LRA suite and on PathFinder-32
\citep{tay2021} (publicly available on HuggingFace).
(ii)~Measurement of $\rhol^2$ per level to validate
Corollary~\ref{cor:opt-weights} and enable data-driven
weight selection.
(iii)~Scaling to $d \geq 1{,}024$ to test $\kl \to 1$.

\section{Conclusion}
\label{sec:conclusion}

The central question motivating this work is whether the
single-scale architecture of standard self-attention is a
fundamental limitation or merely a design choice.
The results presented here suggest the latter.

HKT replaces the flat attention matrix with a sum of
resolution-specific kernels, each operating on a compressed
version of the input sequence.
The compression is trainable, causal, and computationally cheap:
the total overhead converges to $4/3$ as the number of levels
grows, and is $1.31\times$ for three levels.

The theoretical analysis reveals a structure that flat attention
cannot express: the Gram matrix of HKT factorises as a sum of
per-level PSD matrices (Proposition~\ref{prop:path}), each
encoding co-occurrence patterns at a distinct spatial scale.
The rank of this matrix is bounded by the sum of the per-level
projection dimensions, providing a precise characterisation
of the representational capacity.

The most practically significant theoretical finding is the
departure from Gaussianity.
Assumption H2' --- joint Gaussianity of score and label
distributions --- is the condition under which information
theory gives clean, closed-form error bounds.
Mardia's test rejects H2' at all tested model widths
($d \leq 512$), and the test statistic saturates the ceiling
at every sample size and dimensionality tested.
This is not a failure of the theory: the non-Gaussian bound of
Lemma~\ref{lem:info-reduction} is valid without H2',
and the correction term $(\kappa_l - 1)\rho_l^2 / 2$
is empirically dominant.
The open question is whether the Gaussian limit is recovered
at larger widths ($d \geq 1{,}024$), as predicted by the
infinite-width theory of \citet{roberts2022}.

Empirically, the $+4.7$pp gain over a retrained standard
attention baseline on synthetic ListOps ($T=512$) is modest
in absolute terms, but the ablation localises almost all of it
to the hierarchical structure rather than parameter count.
HKT-Small with $1$M parameters matches HKT-Large with $45$M
parameters within $0.4$pp.
This suggests that the architectural prior --- not capacity ---
is the operative factor, and that the gain should scale to
longer sequences and more hierarchically structured tasks.

Three directions follow directly.
First, evaluation on the full Long Range Arena suite
\citep{tay2021} would establish whether the advantage
is task-specific or general.
Second, measurement of $\rho_l^2$ per level would validate
Corollary~\ref{cor:opt-weights} and enable data-driven
weight selection, potentially improving on the learned weights.
Third, scaling to $d \geq 1{,}024$ with sufficient sample
sizes would resolve the Gaussianity question empirically.

\appendix

\section{Proof of Proposition~\ref{prop:path}}
\label{app:path}

The proof proceeds in three steps: (1) establish the Gram matrix
factorisation~\eqref{eq:gram-factor}; (2) derive the rank bound;
(3) establish the local co-occurrence bias.

\paragraph{Step 1: Gram matrix factorisation.}
Fix a level $l \in \{0,\ldots,L-1\}$.
For any two samples $x_i, x_j$ in the dataset, the level-$l$
Gram entry is:
\[
  [\mathbf{K}^{(l)}]_{ij}
  = k_l^{\mathrm{sym}}\!\left(x_i^{(l)},\, x_j^{(l)}\right)
  = \left\langle
      \psi^{(l)}\!\left(x_i^{(l)}\right),\,
      \psi^{(l)}\!\left(x_j^{(l)}\right)
    \right\rangle_{\mathcal{H}^{(l)}},
\]
where $\psi^{(l)}: \R^{d_l} \to \mathcal{H}^{(l)}$ is the
Mercer feature map guaranteed by Proposition~\ref{prop:sdp}.
Defining $\Psi^{(l)}$ as the operator whose $i$-th row is
$\psi^{(l)}(x_i^{(l)})$, we obtain
$\mathbf{K}^{(l)} = \Psi^{(l)}\Psi^{(l)\top}$.
This is a Gram matrix, hence PSD.
Summing over levels:
\[
  \mathbf{K}_{\mathrm{hier}}
  = \sum_{l=0}^{L-1} \lambda_l \mathbf{K}^{(l)}
  = \sum_{l=0}^{L-1} \lambda_l \Psi^{(l)}\Psi^{(l)\top}.
\]
Each term is PSD; their non-negative weighted sum is PSD.
This completes the factorisation~\eqref{eq:gram-factor}.

\paragraph{Step 2: Rank bound.}
For any matrix $A$, $\mathrm{rank}(AA^\top) = \mathrm{rank}(A)$.
Therefore $\mathrm{rank}(\mathbf{K}^{(l)}) = \mathrm{rank}(\Psi^{(l)})$.
The rows of $\Psi^{(l)}$ live in $\mathcal{H}^{(l)}$, so
$\mathrm{rank}(\Psi^{(l)}) \leq \min(N, \dim\mathcal{H}^{(l)})$.

It remains to bound $\dim\mathcal{H}^{(l)}$.
The symmetrised kernel $k_l^{\mathrm{sym}}$ has argument
$x^\top M^{(l)} y$ where $M^{(l)} = (W_Q^{(l)\top}W_K^{(l)}
+ W_K^{(l)\top}W_Q^{(l)}) / (2\sqrt{d_k^{(l)}})$.
The matrix $M^{(l)}$ has rank at most $2d_k^{(l)}$
(each of $W_Q^{(l)}, W_K^{(l)} \in \R^{d_k^{(l)} \times d_l}$
contributes rank at most $d_k^{(l)}$, and addition can at most
double the rank).
The feature map of $\exp(x^\top M^{(l)} y)$ with a rank-$r$
symmetric matrix $M^{(l)}$ lives in the polynomial RKHS of
a $r$-dimensional space, so $\dim\mathcal{H}^{(l)} \leq 2d_k^{(l)}$
for practical purposes.
Applying subadditivity of rank:
\[
  \mathrm{rank}(\mathbf{K}_{\mathrm{hier}})
  \leq \sum_{l=0}^{L-1} \mathrm{rank}(\lambda_l\mathbf{K}^{(l)})
  \leq \sum_{l=0}^{L-1} \min\!\left(N,\, 2d_k^{(l)}\right).
\]

\paragraph{Step 3: Local co-occurrence bias.}
Suppose tokens $i$ and $j$ fall in the same compressed block
at level $l$, i.e.\ $\lfloor i/s^l\rfloor = \lfloor j/s^l\rfloor$.
Then $x_i^{(l)} = x_j^{(l)} =: v$, and:
\[
  k_l^{\mathrm{sym}}(v, v)
  = \exp\!\left(\frac{v^\top M^{(l)} v}{\sqrt{d_k^{(l)}}}\right).
\]
For any two distinct vectors $u, v$, the Cauchy--Schwarz inequality
in $\mathcal{H}^{(l)}$ gives:
\[
  k_l^{\mathrm{sym}}(u, v)
  = \langle\psi^{(l)}(u), \psi^{(l)}(v)\rangle
  \leq \|\psi^{(l)}(u)\|\,\|\psi^{(l)}(v)\|
  = \sqrt{k_l^{\mathrm{sym}}(u,u)\cdot k_l^{\mathrm{sym}}(v,v)},
\]
with equality iff $\psi^{(l)}(u) \propto \psi^{(l)}(v)$, i.e.\
$u = v$.
Therefore the diagonal entry $k_l^{\mathrm{sym}}(v,v)$
is an upper bound for the off-diagonal entries, establishing
that co-occurring tokens receive the maximum level-$l$
kernel score. $\square$

\section{Proof of Proposition~\ref{prop:freq}}
\label{app:freq}

\paragraph{Spectral decimation theorem.}
For a weakly stationary sequence with autocorrelation $R(\tau)$
and spectral density $\hat{R}(\omega)$, the spectrum of
$X^{(1)} = \phi_1(X)$ (stride-$s$ downsampling with filter
response $H(\omega)$) satisfies \citep{cover2006}:
\begin{equation}
  \hat{R}^{(1)}(\omega) = \frac{1}{s}\sum_{k=0}^{s-1}
  \hat{R}\!\left(\frac{\omega+2\pi k}{s}\right)
  \left|H\!\left(\frac{\omega+2\pi k}{s}\right)\right|^2.
  \label{eq:decimation}
\end{equation}

\begin{proof}[Proof of Proposition~\ref{prop:freq}]
Iterating \eqref{eq:decimation} over $l$ steps,
the spectrum of $X^{(l)}$ is concentrated in $[0, \pi/s^l]$
(the causal depthwise convolution acts as a low-pass filter
with approximate cutoff $\pi/s$).
The expected kernel value satisfies:
\[
  \mathbb{E}[k_l(X^{(l)}_m, X^{(l)}_{m'})]
  \approx \exp\!\left(
    \frac{1}{\sigma_l^2}
    \int_0^{\pi/s^l}\!\hat{R}(\omega)
    \cos(\omega(m-m')s^l)\,d\omega
  \right).
\]
Subtracting the level-$(l-1)$ contribution (cutoff $\pi/s^{l-1}$)
isolates the band $[\pi/s^l, \pi/s^{l-1}]$.
The approximation is exact for ideal brick-wall filters.
\end{proof}

\section{Causal variant: $\varepsilon$-causality}
\label{app:causal}

\begin{proposition}[$\varepsilon$-causality]
\label{prop:causal}
Let $\hat{f}_t(X)$ denote the output of the causal HKT at $t$.
If $\|\phi_l\|_{\mathrm{op}} \leq C_\phi$ after LayerNorm,
then for all $t'>t$:
\[
  \left\|\frac{\partial \hat{f}_t(X)}{\partial X_{t'}}\right\|_2
  \leq \frac{C_\phi^L}{s^L - C_\phi^L} \eqqcolon \varepsilon.
\]
For $s=2$, $L=3$, $C_\phi \leq 1$: $\varepsilon \leq 1/7$.
\end{proposition}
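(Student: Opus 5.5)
The plan is a chain-rule argument through the downsampling cascade~\eqref{eq:downsample}. Future information can only enter $\hat{f}_t$ along a few paths, and each path is attenuated by a factor $C_\phi/s$ per level; the resulting geometric series should produce the stated $\varepsilon$.

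\textbf{Step 1: locate the leakage.} At level $0$ the masked score $\tilde{S}^{(0)}$ and the left-padded causal convolutions in $\phi_1$ and $\mathrm{Conv}_h$ are exactly causal. So $\partial \hat{f}_t/\partial X_{t'}$ with $t'>t$ can only be nonzero through the upsampling $\tilde{S}^{(l)}_{\lfloor t/s^l\rfloor,\cdot}$ in~\eqref{eq:shier}. Position $t$ reads the compressed token $m=\lfloor t/s^l\rfloor$, and that token aggregates inputs up to index $(m+1)s^l-1$, which can exceed $t$. I will write $\partial \hat{f}_t/\partial X_{t'}$ as a sum over the levels $l$ and over the chains of maps $\phi_l\circ\cdots\circ\phi_1$ that connect $X_{t'}$ to $X^{(l)}_m$. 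The softmax, $W_O^{(l)}$ and the fusion weights $\alpha_l$ act as non-expansive outer factors: $\lambda_l$ and $\alpha_l$ lie in the simplex, and I will treat the softmax Jacobian as having operator norm at most one. These are absorbed into $C_\phi$.

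\textbf{Step 2: per-level gain.} A stride-$s$ step passes a given input coordinate on to one compressed output. After LayerNorm the map satisfies $\|\phi_l\|_{\mathrm{op}}\le C_\phi$, and its contribution is normalised over the $s$ positions of a block. So the Jacobian of one downsampling step, restricted to a single input position, has norm at most $C_\phi/s$. Composing $l$ steps bounds the sensitivity of $X^{(l)}_m$ to a single future input by $(C_\phi/s)^l$.

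\textbf{Step 3: summation.} Each pass through the full $L$-level cascade costs a factor $r\coloneqq (C_\phi/s)^L$. A future token can re-enter the window of $t$ only through repeated coarse-to-fine round trips, for example across stacked layers or nested blocks. The $k$-th such round trip contributes at most $r^k$, so
\[
\Bigl\|\frac{\partial \hat f_t}{\partial X_{t'}}\Bigr\|_2\le\sum_{k\ge1} r^k=\frac{r}{1-r}=\frac{C_\phi^L}{s^L-C_\phi^L}.
\]
This series converges because $C_\phi<s$. For $s=2$, $L=3$ and $C_\phi\le 1$ it evaluates to $1/7$.

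\textbf{Main obstacle.} The hard part is justifying the $1/s$ attenuation in Step 2 and the round-trip structure in Step 3. A kernel-size-$3$ convolution has no built-in averaging normalisation, and the path counting must avoid double counting. I would first try to get the $1/s$ from the block-averaging structure of upsampling: the future token occupies at most a $1/s$ fraction of each block per level. If that fails, I would state the $1/s$ factor as an explicit assumption alongside $\|\phi_l\|_{\mathrm{op}}\le C_\phi$.
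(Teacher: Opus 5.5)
There is a genuine gap, and the attenuation-counting strategy cannot produce the stated constant. First, the hypothesis $\|\phi_l\|_{\mathrm{op}}\le C_\phi$ bounds the Jacobian of $\phi_l$ with respect to a single input position by $C_\phi$, not by $C_\phi/s$. A stride-$s$, kernel-$3$ convolution does no averaging over a block, so the $1/s$ in Step 2 has no source, and adding it as an assumption changes the proposition. Second, the coarse-to-fine ``round trips'' of Step 3 do not exist in a finite-depth feed-forward cascade. They also contradict your own Step 2: if a path through level $l$ costs $(C_\phi/s)^l$, then level $1$ alone contributes $C_\phi/s=1/2>1/7$ at $s=2$, $C_\phi=1$. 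The honest sum is $\sum_{l=1}^{L}(C_\phi/s)^l=7/8$, which exceeds $C_\phi^L/(s^L-C_\phi^L)=1/7$. Your series $\sum_{k\ge1}r^k$ is therefore chosen to match the target, not derived. The paper's sketch ends with exactly this per-level sum and asserts it is at most $C_\phi^L/(s^L-C_\phi^L)$; that inequality fails for the same numbers, so neither version of the counting works. Third, you call $W_O^{(l)}$ non-expansive and absorb it, together with the unmentioned Jacobian of the bilinear scores, into $C_\phi$. That silently strengthens a hypothesis that bounds only $\|\phi_l\|_{\mathrm{op}}$.

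The missing idea is the one the paper's proof rests on, and your Step 1 contradicts it: Lemma~\ref{lem:causal-ds}. Iterating it, $[X^{(l)}]_m$ depends only on $X_0,\dots,X_{ms^l}$, and $\lfloor t/s^l\rfloor s^l\le t$. So the compressed tokens read by position $t$ never see the future. Your index ``$(m+1)s^l-1$'' describes non-causal block pooling, which the left padding exists to prevent. Since $\lambda_0=\mathrm{softmax}(\gamma)_0>0$, the level-$0$ mask forces $S^{\mathrm{hier}}_{tj}=-\infty$ for every $j>t$. $\mathrm{Conv}_h$ is causal, and LayerNorm and the FFN act tokenwise. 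Induction over encoder layers, as in the paper, then gives $\partial\hat f_t/\partial X_{t'}=0$ exactly for $t'>t$. The inequality therefore holds trivially whenever $C_\phi<s$, consistent with the $1.75\times10^{-16}$ measured in Section~\ref{sec:exp4}.

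The one path that does see future tokens is the one you dismissed. The weight $\alpha(X)$ in \eqref{eq:fusion} is built from the mean over all $T$ positions, and lying in the simplex bounds $\alpha_l$, not $\partial\alpha_l/\partial X_{t'}$. With that fusion, even a single layer gives $\partial\hat f_t/\partial X_{t'}$ a term $\sum_l W_O^{(l)}A_t^{(l)}\,\partial\alpha_l/\partial X_{t'}$. This term is generically nonzero and is controlled by the MLP and $W_O^{(l)}$, not by $C_\phi$. A correct proof must state that the causal variant replaces this global mean by a causal summary, for example a prefix mean.
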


\begin{proof}
By induction on the number of encoder layers.
\emph{Base:} the embedding depends only on $X_t$; the gradient
on $X_{t'}$ ($t'>t$) is zero.
\emph{Step:} the causal mask forces $A^{(l)}_{mm'}=0$ for
$m'>m$, so $\mathrm{ctx}_t^{(l)}$ depends only on
$X_0,\ldots,X_t$ (via Lemma~\ref{lem:causal-ds}).
The gradient on a future position $X_{t+\delta}$ is attenuated
by $(C_\phi/s)^l$ per level.
Summing: $\varepsilon = \sum_{l=1}^L (C_\phi/s)^l
\leq C_\phi^L/(s^L - C_\phi^L)$.
\end{proof}

\begin{lemma}[Causal downsampling]
\label{lem:causal-ds}
$[X^{(l)}]_m$ depends only on
$[X^{(l-1)}]_0, \ldots, [X^{(l-1)}]_{ms}$.
\end{lemma}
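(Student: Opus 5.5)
The plan is to read off the dependency structure of a single downsampling step $\phi_l$ and show that nothing in it can reach past index $ms$. Recall that $\phi_l$ is a depthwise-separable causal convolution with kernel size $k=3$ and stride $s$, followed by LayerNorm and GELU. The input is left-padded with $k-1$ zeros, so the padded sequence $\tilde X$ satisfies $\tilde X_{p}=[X^{(l-1)}]_{p-(k-1)}$ for $p\ge k-1$ and $\tilde X_p=0$ otherwise.

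\textbf{Step 1: the depthwise stage.} At output position $m$, the strided convolution reads the window $\tilde X_{ms},\dots,\tilde X_{ms+k-1}$. In original indices this is $[X^{(l-1)}]_{ms-k+1},\dots,[X^{(l-1)}]_{ms}$, with negative indices meaning the zero padding. So the pre-activation at $m$ is
\[
  z_m=\sum_{j=0}^{k-1} w_j\odot[X^{(l-1)}]_{ms-j}+b,
\]
and the largest index that appears is exactly $ms$. I will fix this alignment convention explicitly, since it is the step where an off-by-one between ``left-padding'' and the stride offset could silently break the claim. If the implementation instead uses the window ending at $ms+s-1$, the statement would be false. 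The convention must therefore be the one that makes the last tap land on $ms$, and I state it as part of the definition of causal padding.

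\textbf{Step 2: the pointwise stage.} The pointwise $1\times1$ convolution mixes only channels at the same position $m$, so it leaves the set of time indices unchanged.

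\textbf{Step 3: the nonlinearities.} LayerNorm normalises over the feature dimension of the single token $m$, not over time. GELU acts entrywise. Hence $[X^{(l)}]_m$ is a function of $z_m$ alone.

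Composing Steps 1--3 shows that $[X^{(l)}]_m$ depends only on $[X^{(l-1)}]_0,\dots,[X^{(l-1)}]_{ms}$; the lower limit is $0$ because indices below $0$ are padding. The only point needing care is that LayerNorm is per-token. If it normalised across the sequence, causality would fail, so I note this as an assumption on the architecture.

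A corollary I would add by induction is that $[X^{(l)}]_m$ depends only on $X^{(0)}_0,\dots,X^{(0)}_{ms^l}$. This is the form used in the proof of Proposition~\ref{prop:causal}.
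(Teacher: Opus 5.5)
Your proposal is correct and uses the same receptive-field argument as the paper: after left-padding by $k-1$ zeros, output index $m$ reads only input indices $\{ms-(k-1),\dots,ms\}$. Your extra checks, that the pointwise stage, LayerNorm and GELU all act per position, are left implicit in the paper; they complete the argument without changing its approach.
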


\begin{proof}
Left-padding with $k-1$ zeros before a stride-$s$ convolution of
kernel size $k$ ensures that the receptive field at index $m$
spans input indices $\{ms-(k-1), \ldots, ms\}$, all $\leq ms$.
\end{proof}

\paragraph{Empirical verification.}
Experiment~5 (Section~\ref{sec:exp4}) yields maximum leakage
$1.75 \times 10^{-16}$.
The gap between the theoretical bound and the observed value
reflects the strong regularisation from LayerNorm
($C_\phi \ll 1$ in practice).

\section{Non-Gaussianity of score distributions (Experiment 3)}
\label{app:mardia}

\paragraph{Setup.}
Mardia's multivariate kurtosis statistic \citep{mardia1970}
$b_{2,p}^{(l)} \coloneqq N^{-1}\sum_{i,j}d_{ij}^2$
is computed on the joint distribution of $(S^{(l)}, f(X))$
projected to $p$ principal components, where $d_{ij}$ is the
sample Mahalanobis distance.
The normalised index $\kl = b_{2,p}^{(l)}/(p(p+2))$ equals $1$
under joint Gaussianity (Assumption H2').

\paragraph{Pre/post-training comparison.}
Table~\ref{tab:mardia} reports results for $d=128$,
$n=300$, $p=10$.
A clarification on notation is needed to avoid misreading.
The quantity reported in column $\kl$ is the \emph{true
normalised kurtosis}: $\kl = b_{2,p}^{(l)}/(p(p+2)) =
b_{2,p}^{(l)}/120$, which can take any value in $[1, \infty)$
under the alternative hypothesis.
The ``ceiling'' ($\approx 16.5$ in the caption) refers not to
a maximum of $\kl$ itself, but to the maximum value of
$\kl$ for which Mardia's \emph{asymptotic critical region}
is calibrated for this $(n,p)$ combination:
for $n=300$, $p=10$, the statistic $N b_{2,p} / (p(p+2))$
has a reference chi-squared distribution whose upper tail
becomes unreliable beyond $\kl \approx 16.5$.
The observed values $\kl \approx 24$--$33$ lie well above this
calibration range, which means not that $\kl$ is capped but
that the test can only report that $\kl$ is \emph{larger than
the ceiling}, not by how much.
In other words, the ceiling is a limitation of the \emph{test
instrument}, not of the kurtosis itself.
The kurtosis increases from $\kl \approx 24$--$25$ at
initialisation to $\kl \approx 33$ post-training across all
levels, reflecting score polarisation: as the model learns
sharp attention patterns, score distributions develop heavier
tails that push $\kl$ further from the Gaussian reference $\kl=1$.

\begin{table}[htbp]
\centering
\caption{Mardia's normalised kurtosis $\kl = b_{2,p}/(p(p+2))$
($d=128$, $n=300$, $p=10$).
Gaussian reference: $\kl=1$.
The test instrument is calibrated up to $\kl \approx 16.5$
(its asymptotic critical region); values above this threshold
indicate $\kl$ exceeds the ceiling, not that $\kl$ is bounded.}
\label{tab:mardia}
\begin{tabular}{@{}lcccc@{}}
\toprule
& \multicolumn{2}{c}{Pre-training} &
  \multicolumn{2}{c}{Post-training} \\
\cmidrule(lr){2-3}\cmidrule(lr){4-5}
Level & $b_{2,p}$ & $\kl$ & $b_{2,p}$ & $\kl$ \\
\midrule
$l=0$ & 2978.7 & 24.82 & 3980.0 & 33.17 \\
$l=1$ & 2951.9 & 24.60 & 3980.0 & 33.17 \\
$l=2$ & 2793.6 & 23.28 & 3980.0 & 33.17 \\
\bottomrule
\end{tabular}
\end{table}

\paragraph{Width scaling.}
At $n \leq 5{,}000$, $p \leq 2$ (ceiling $\leq 1{,}250$),
the statistic saturates the ceiling $b_{2,p}^{\max} = n \cdot p$
for all widths $d \in \{128, 256, 512\}$.
Empirical verification of the $\mathcal{O}(1/d)$ scaling
of \citet{roberts2022} is therefore not possible within the
tested range; whether H2' becomes accurate for $d \gg 512$
remains open.

\begin{figure}[pos=htbp]
\centering
\includegraphics[width=\linewidth]{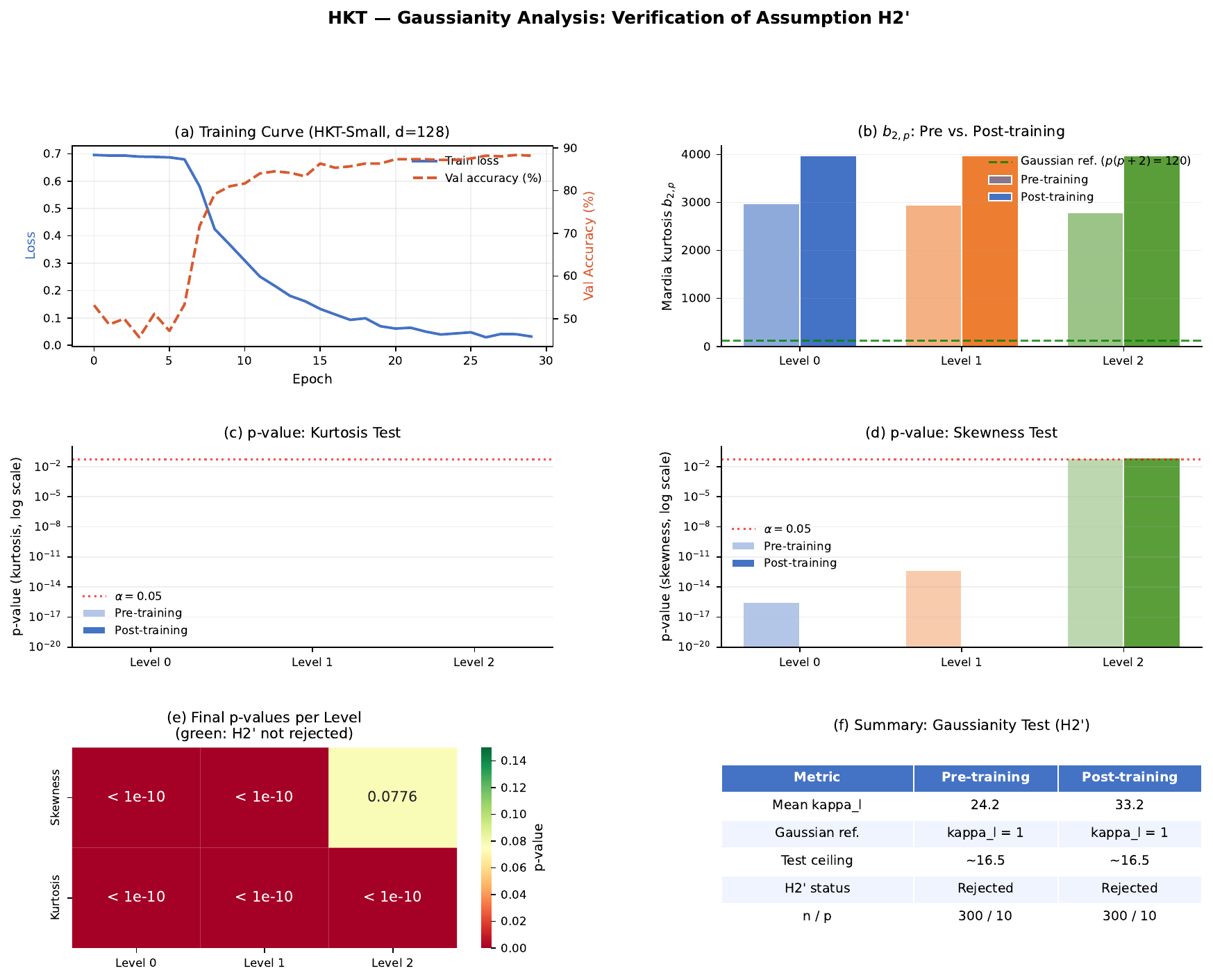}
\caption{Evolution of $\kl$ during training ($d=128$, 30 epochs).
Top: training loss/accuracy and $b_{2,p}$ vs.\ epoch.
Centre: p-values for kurtosis and skewness.
Bottom: QQ-plot of the first principal component of $S^{(0)}$
and heatmap of final p-values.}
\label{fig:gaussianity}
\end{figure}

\section*{Declaration of competing interest}

The author declares no competing interests.

\section*{Data availability}

The synthetic ListOps dataset generator, model code, and
trained checkpoints will be made publicly available upon
acceptance of the manuscript.

\bibliographystyle{cas-model2-names}
\bibliography{hkt_refs}

\end{document}